\theoremstyle{plain}
\newtheorem{theorem}{Theorem}
\newtheorem{lemma}{Lemma}
\theoremstyle{definition}
\newtheorem{definition}{Definition}
\theoremstyle{remark}
\definecolor{maroon}{rgb}{0.7, 0.11, 0.11}
\definecolor{darkblue}{rgb}{0.0, 0.44, 1.0}
\newcommand{\norm}[1]{\left\lVert #1 \right\rVert}
\DeclarePairedDelimiter{\abs}{\lvert}{\rvert}
\DeclareMathOperator*{\argmin}{argmin}
\DeclareMathOperator*{\argminA}{arg\,min}
\DeclareMathOperator*{\argmaxA}{arg\,max}
\newcommand{\squeeze}[1]{{#1\parfillskip=0pt\par}}
\newcommand{\cmark}{\ding{51}}%
\newcommand{\xmark}{\ding{55}}%
\newcommand{\STAB}[1]{\begin{tabular}{@{}c@{}}#1\end{tabular}}
\newcommand{\tb}{\textbf}
\title{ESCADA: Efficient Safety and Context Aware \\ Dose Allocation for Precision Medicine}
\author{%
  Ilker~Demirel\thanks{Now a graduate student at MIT CSAIL. Email: \texttt{demirel@mit.edu}}  \\
  Bilkent University \\
  \texttt{ilkerd@ee.bilkent.edu.tr} \\
   \And
   A.~Alparslan~Celik \\
   Bilkent University \\
   \texttt{acelik@ee.bilkent.edu.tr} \\
   \And
   Cem~Tekin \\
   Bilkent University \\ 
   \texttt{cemtekin@ee.bilkent.edu.tr} \\
}
\begin{document}

\maketitle

\begin{abstract}
Finding an optimal individualized treatment regimen is considered one of the most challenging precision medicine problems. Various patient characteristics influence the response to the treatment, and hence, there is no one-size-fits-all regimen. Moreover, the administration of an unsafe dose during the treatment can have adverse effects on health. Therefore, a treatment model must ensure patient \emph{safety} while \emph{efficiently} optimizing the course of therapy. We study a prevalent medical problem where the treatment aims to keep a physiological variable in a safe range and preferably close to a target level, which we refer to as \emph{leveling}. Such a task may be relevant in numerous other domains as well. We propose ESCADA, a novel and generic multi-armed bandit (MAB) algorithm tailored for the leveling task, to make safe, personalized, and context-aware dose recommendations. We derive high probability upper bounds on its cumulative regret and safety guarantees. Following ESCADA's design, we also describe its Thompson sampling-based counterpart. We discuss why the straightforward adaptations of the classical MAB algorithms such as GP-UCB may not be a good fit for the leveling task. Finally, we make \emph{in silico} experiments on the bolus-insulin dose allocation problem in type-1 diabetes mellitus disease and compare our algorithms against the famous GP-UCB algorithm, the rule-based dose calculators, and a clinician.
\end{abstract}

\section{Introduction} \label{sec:introduction}
Precision medicine aims to provide the best possible treatment on an individual level by considering patient characteristics' variability \cite{mirnezami2012preparing, ashley2016towards}. Many healthcare problems require keeping a physiological variable (e.g., blood glucose level) in a {\em safe} range and preferably close to a target level. One such example is electrolyte disorders, common among intensive care unit patients. When the blood sodium level falls below 135 milliequivalents per liter (mEq/L) or goes beyond 145 mEq/L, the patient experiences hypo-/hyper-natremia with adverse effects on health \cite{kraft2005treatment}. Therefore, correct dosing of electrolytes is crucial to ensure patient safety, and there is no consensus on how to assess the correct dosage for different patient characteristics. Another critical problem is blood pressure disorder. These are hypo-/hyper-tension events where the blood pressure deviates from its standard value and needs to be corrected. Patient characteristics play an essential role in determining the blood pressure response to the therapeutic agent, and they should be taken into account in the dosing process \cite{nerenberg2018hypertension}.

\textbf{Related work and background}\hspace{11pt}A fair amount of research is dedicated to adaptive clinical trials which aim to identify a drug's effectiveness within a group, including a tradeoff between efficacy and toxicity \cite{shen2020learning, lee2020contextual, villar2018covariate, lee2021sdf}. The algorithms proposed in these works are not applicable to the problem structure considered here for two main reasons. First, the therapeutic agent is not necessarily {\em toxic}, and our aim is not to maximize the response to the agent but to keep it close to a target level. Therefore, classical upper confidence bound (UCB) based algorithms such as UCB1 \cite{auer2002finite} or GP-UCB \cite{srinivas2010gaussian} are not applicable for our objective. That is simply because the UCB-based algorithms leverage the {\em optimism in the face of uncertainty} (OFU) principle to form optimistic estimates of arm outcomes and pick the arm with the highest estimated outcome. However, in our case, {\em optimism} refers to the proximity of an arm's outcome to the target. This fundamental difference in our task necessitates a novel acquisition strategy. One could simply form pseudo-rewards to maximize, such as $r(n) = -\abs{o(n) - K}$, where $o(n)$ is the outcome at the end of round $n$ and $K$ is the target level. We particularly refrain from doing so as different reasonable choices for the pseudo-reward will lead the algorithm to operate differently in practice. Therefore, we keep the objective (i.e., minimize $\abs{o(n) - K}$) in the most generic form and propose a suitable acquisition strategy instead. We provide more details on our objective and motivation behind designing a new acquisition strategy in \cref{sec:probform}. Secondly, our goal is to provide personalized recommendations rather than for a group of patients. We approach the safe dose allocation problem from a contextual multi-armed bandit (MAB) \cite{lu2010contextual} perspective with additional safety constraints and propose a novel acquisition function tailored for this problem structure in \cref{sec:escada}. 

To render our acquisition method safe, we propose a safe exploration strategy. There is a surge of interest in safe exploration for Bayesian optimization (BO), Markov decision processes, MABs, and reinforcement learning in general. \cite{moldovan2012safe, wachi2018safe, gelbart2014bayesian, gelbart2016general}. \cite{abeille2017linear} propose the linear Thompson sampling (LTS) algorithm for the linear stochastic bandit (LSB) setting by adding a random perturbation to the regularized least-squares estimates of the parameters in a way that the OFU principle can be used. \cite{moradipari2021safe} modifies the LTS' randomization procedure to continue leveraging the OFU principle in the face of additional safety constraints and matches LTS' order of regret. \cite{khezeli2020safe} proposes a safe algorithm incurring a near-optimal expected regret for the LSB problem as well, which uses the arm outcomes' lower confidence bounds to guarantee the safety of exploration and greedily exploit when it is safe.

There is a strand of literature on ``risk-averse'' MABs, where the learner is concerned not only with maximizing long-term earnings but also with reducing a certain measure of {\em risk} \citep{cassel2018general, sani2012risk}. \cite{vakili2015mean, vakili2016mean, sani2012risk} investigate the MAB problem using two risk measures, Mean-Variance and Value-at-Risk, which are widely adopted in financial portfolio management \citep{steinbach2001markowitz}. \cite{galichet2013exploration, cardoso2019risk, khajonchotpanya2021revised} study the Conditional-Value-at-Risk measure, which captures the tail-risk better compared to the Value-at-Risk measure. Another related area is the ``conservative'' bandits, where the learner's cumulative reward must always exceed a predetermined fraction of a baseline's \citep{kazerouni2017conservative, wu2016conservative}. These works, however, do not address {\em stagewise} safety constraints on instantaneous arm outcomes, which must be explicitly satisfied at any given time.

We operate in a BO framework where we model the objective function as a sample from a Gaussian process (GP). \cite{gelbart2014bayesian, gelbart2016general} consider BO with stagewise safety constraints. However, they aim to find optimal safe solutions and allow unsafe evaluations during exploration. \cite{amani2020regret} propose a safe variant of GP-UCB, which employs a pure exploration phase at the beginning and provides upper bounds on its cumulative regret. SafeOPT and StageOPT algorithms provide guarantees on the safety of the exploration process \cite{sui2015safe, sui2018stagewise}. However, they model the exploration of the safe set as a proxy objective which leads to unnecessary suboptimal evaluations at the boundaries of the safe set \cite{turchetta2019safe}. Moreover, they do not provide formal regret bounds. Goal-oriented Safe Expansion (GoOSE) algorithm works with any acquisition function as a {\em plug-in} safety mechanism and encourages the expansion of the safe set only when necessary \cite{turchetta2019safe}. When the query is not guaranteed to be safe, only then GoOSE expands the safe set by evaluating the function at safe points to learn more about the initial query's safety. However, such re-evaluations are not possible within the framework of dynamic treatment regimes since this setup does not allow the administration of multiple doses. Moreover, all the works above consider a one-sided safety constraint ($f(x) \geq c$), whereas we consider a two-sided one as the aim is to keep $f(x)$ in a range ($c_1 \leq f(x) \leq c_2$). We provide a table comparing our work to some existing literature on safe exploration with GPs in \cref{sec:rw}. Our key contributions are as follows.

We study an important and overlooked problem in medicine that which is relevant in other domains as well, such as demand-side management \cite{bregere2019target}. We formalize the problem from a contextual MAB perspective via a suitable definition of \emph{regret} as the proxy performance metric in \cref{sec:probform}. Since our objective is to keep the outcomes close to a target rather than maximize them as in the usual MAB setting, we propose a novel acquisition method in \cref{sec:escada}. We design a safe exploration scheme for our acquisition function in \cref{sec:escada} and derive high probability upper bounds on its regret with safety guarantees in \cref{sec:theo_analysis}. We make {\em in silico} experiments on type-1 diabetes mellitus (T1DM) disease in \cref{sec:experiments}. T1DM is characterized by insulin deficiency due to pancreatic $\beta$-cell loss, and it can have adverse effects which might result in hospitalization and death \cite{bastaki2005diabetes}. Therefore, T1DM patients must regulate their blood glucose by administering bolus insulin doses before meals. We optimize the dose recommendation process via {\em safely} and {\em efficiently} learning to recommend better doses.

\section{Problem statement} \label{sec:probform}
We denote by $[N]$ the set $\{1,\ldots,N\}$, $\bm{z} \in \mathcal{Z}$ a context, and $d \in \mathcal{D}$ a dose, where both $\mathcal{Z}$ and $\mathcal{D}$ are compact and convex, and $\mathcal{D} = [0, \overline{D}]$. Let $f:~\mathcal{Z}~\times~\mathcal{D}~\rightarrow~ \Omega$ be the \emph{unknown} function that maps $(\bm{z}, d)$ pairs to the physiological variable of interest, where $\Omega = [0, \overline{T}]$. At round $n \in [N]$, the learner observes a context, $\bm{z}_n$, and recommends a dose, $d_n$, to obtain a noisy evaluation of $f$ at $(\bm{z}_n, d_n)$, given as $y_n = f(\bm{z}_n, d_n) + \nu_n$, where $\nu_n$ are zero-mean i.i.d. Gaussian with known variance $\sigma^2$.  The learner's objective is to keep the physiological variable, $f(\bm{z}_n, d_n)$, within a safe range and close to the target level. We formalize this objective as a contextual MAB problem with safety constraints as, 
\begin{align}
\text{minimize}~~~&R_N = \sum\nolimits_{n=1}^N \abs{f(\bm{z}_n, d_n) - T} \label{eqn:regret} \\
\text{subject to}~~~&T_{\min} \leq f(\bm{z}_n, d_n) \leq T_{\max}, \hspace{10pt} \forall n \in [N]~, \label{eqn:safety_constraint}
\end{align}
\begin{wrapfigure}{L}{0.33\textwidth}
\vspace{-15pt}
	\centering
	\includegraphics[width=0.33\textwidth]{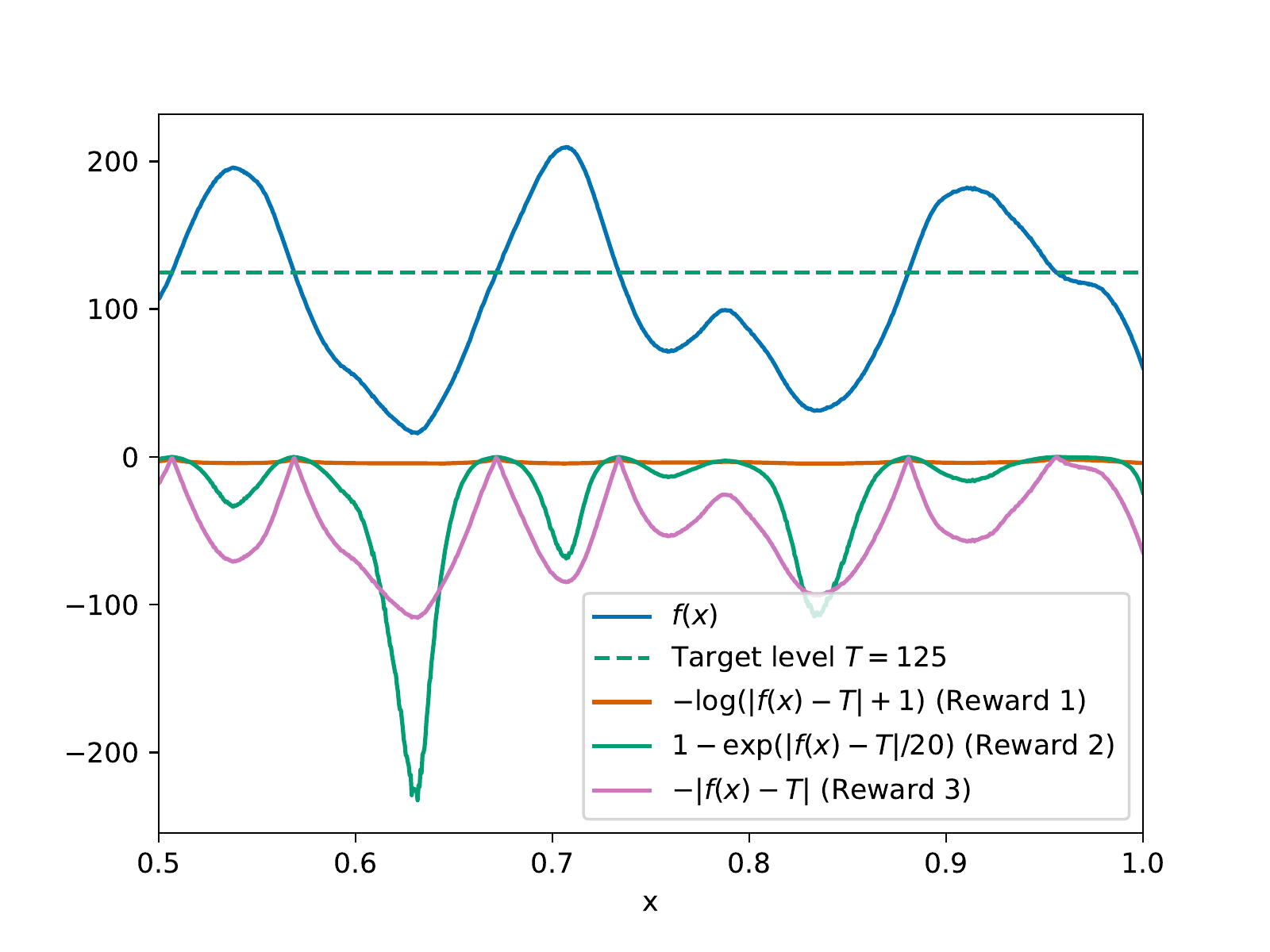}
	\caption{A hypothetical objective function $f(x)$, and three candidate reward functions.}
	\label{fig:rewards}
\vspace{-10pt}
\end{wrapfigure}
where $T_{\min}$ and $T_{\max}$ denote the lower and upper safety thresholds for $f$, respectively, and $T~\in~(T_{\min}~+~\alpha,T_{\max}~-~\alpha)$ is the target value, where $\alpha > 0$. We introduce the non-zero $\alpha$ term to ensure that the target level is not exactly equal to the safety thresholds, which is required later in the analyses. We assume $\forall \bm{z} \in \mathcal{Z}$, there exists $d^*_{\bm{z}} \in \mathcal{D}$ such that $f(\bm{z}, d^*_{\bm{z}}) = T$. 
		
\textbf{Regularity assumptions}\hspace{11pt}Our safe exploration strategy relies on expanding around an initial safe set by exploiting the smoothness properties of the objective function $f(x)$. Without an initial safe set, and some regularity assumptions on $f(x)$, it is not possible to make inferences on the safety of the prospective recommendations \cite{sui2015safe}. Let $\mathcal{X}~=~\mathcal{Z}~\times~\mathcal{D}$ denote the space of all context-dose pairs. Let $k(\cdot,\cdot)$ be a positive definite kernel function on ${\cal X}$. We assume that $f(x)$ is a function from the {\em Reproducing Kernel Hilbert Space} (RKHS) corresponding to $k(\cdot,\cdot)$. In addition, we assume that $f(x)$ has bounded norm in this particular RKHS, i.e., $\left\| f \right\|_k < B_f$ \cite{scholkopf2002learning}. This mild assumption makes $f(x)$ smooth enough to be efficiently learnable by a GP. More precisely, $f(x)$ is $L$-Lipschitz continuous w.r.t. kernel metric $q(\bm{x}, \bm{x'}) = \sqrt{k(\bm{x}, \bm{x}) - 2 k(\bm{x}, \bm{x'}) + k(\bm{x'}, \bm{x'})}$, where $L~=~B_f$ \cite{steinwart2008support}. Also, we denote by $q_{\bm{z}}(d,d') \coloneqq q((\bm{z},d),(\bm{z},d'))$. At this point, we define a discretization of $\mathcal{D}$ for every $\bm{z} \in \mathcal{Z}$ as,
\begin{align*}
	\overline{\mathcal{D}}_{\bm{z}} \coloneqq \{ d_i(\bm{z}) \in \mathcal{D} \mid i \in \{1,\ldots,k \} \}~,
\end{align*}
where $d_1(\bm{z}) = 0$, $d_i(\bm{z}) > d_{j}(\bm{z})$ for $i > j$, $q_{\bm{z}}(d_i,d_{i+1}) = \lambda/2L$, $q_{\bm{z}}(d_k, \overline{D}) < \lambda/2L$, and $\lambda > 0$ is the discretization parameter. We assume that an initial safe set of discretized doses $S_0(\bm{z})$ is available for each $\bm{z} \in \mathcal{Z}$. These assumptions allow us to use Gaussian processes (GP) to design our algorithm, and analyze its regret and safety guarantees \cite{rasmussen2004gaussian}. A GP is a distribution over functions which is characterized by its mean, $\mu(\cdot)$, and covariance, $k(\cdot,\cdot)$, functions. Once we assume a GP prior over $f(x)$, after observing a set of noisy evaluations $\bm{y}_N = [y_1 \dots y_N]^T$ at $A_N = \{\bm{x}_1, \dots, \bm{x}_N\}$, the posterior over $f(x)$ is a GP again with the following mean and covariance functions,
\begin{align*}
    k_N(\bm{x}, \bm{x}') &= k (\bm{x}, \bm{x}') - \bm{k}_N (\bm{x})^T \left( \bm{K}_N + \sigma^2 \bm{I} \right)^{-1} \bm{k}_N (\bm{x}') \\
    \sigma_N^2 (\bm{x}) &= k_N (\bm{x}, \bm{x}) \\
        \mu_N (\bm{x}) &= \bm{k}_N (\bm{x})^T \left( \bm{K}_N + \sigma^2 \bm{I} \right)^{-1} \bm{y}_N~,
\end{align*}
where $\bm{k}_N (\bm{x}) = [k(\bm{x}_1, \bm{x}),\ldots,k(\bm{x}_N,\bm{x})]^T$ and $\bm{K}_N$ is the positive definite kernel matrix $[k(\bm{x}, \bm{x}')]_{\bm{x}, \bm{x}' \in A_N}$.

\squeeze{\textbf{Comparison with GP-UCB}\hspace{9pt}Our objective is to keep $f(x)$ close to a target level $T$. As we discussed in \cref{sec:introduction}, one could use the GP-UCB algorithm in \cite{srinivas2010gaussian} if the objective was to {\em maximize} $f(x)$. In our case, however, we have to define {\em pseudo-rewards} to maximize such as $-\abs{f(\bm{z}_n, d_n) - T}$ that are decreasing with $\abs{f(\bm{z}_n, d_n) - T}$ to capture the ``leveling" task in \eqref{eqn:regret}. We present three such reward functions in Figure~\ref{fig:rewards}. However, $f(x)$ being smooth and efficiently learnable by a GP does not imply that a reward functional defined on $f(x)$ will be as well. Figure~\ref{fig:rewards} shows that different reward functions can have significantly different landscapes. For instance, it is almost impossible to achieve our task efficiently by using the so-called ``plausible'' Reward 1. In \cref{sec:experiments}, we compare our algorithms' performances against the GP-UCB's for three reward functions in Figure~\ref{fig:rewards}. Also, when $T \neq (T_{\min} + T_{\max})/2$ (which may well be the case, see \cref{sec:experiments}), the reward-based GP-UCB method needs another GP to directly learn $f(x)$ to efficiently satisfy the safety requirements, doubling the computational complexity compared to our algorithms which use a single GP for everything. Finally, by learning $f(x)$ with a GP, we can provide interpretations for our model's recommendations (see Figure~\ref{alg_sch_v2}).}

\section{ESCADA algorithm} \label{sec:escada}
\begin{figure*}[t]
	\begin{center}
		\centerline{\includegraphics[width=\linewidth]{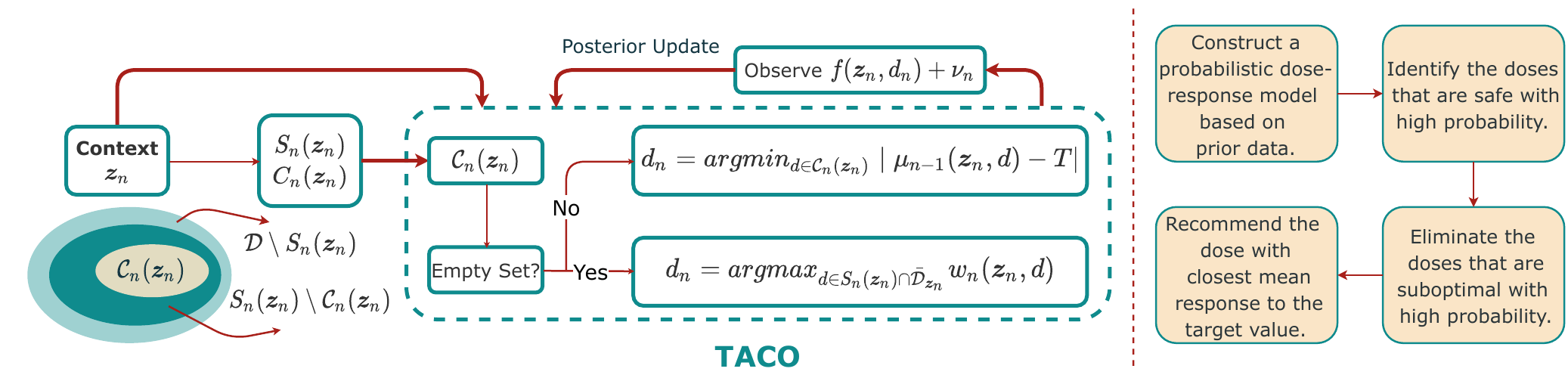}}
		\caption{{\em ESCADA Algorithm Description (left).} Upon observing a context $\bm{z}_n$ in round $n$, TACO forms the set $\mathcal{C}_n (\bm{z}_n) \subseteq S_n (\bm{z}_n)$ after eliminating the doses that are suboptimal with high probability (TACO uses ${\cal D}_n = S_n (\bm{z}_n)$ in Algorithm~\ref{alg:algo} to ensure safety with high probability). If $\mathcal{C}_n (\bm{z}_n) \neq \emptyset$, it recommends the dose whose mean response is closest to the target ~$T$. If $\mathcal{C}_n (\bm{z}_n) = \emptyset$, it recommends the dose with the widest confidence interval in $S_n (\bm{z}_n)~\cap~\overline{\mathcal{D}}_{\bm{z}_n}$. {\em Flowchart (right)}. A simple interpretation of the dose allocation process intended for domain experts.}
		\label{alg_sch_v2}
	\end{center}
	\vspace{-25pt}
\end{figure*}
\begin{wrapfigure}{R}{0.4\textwidth}
	\vspace{-23pt}
	\begin{minipage}{0.4\textwidth}
		\begin{algorithm}[H]
			\caption{ESCADA algorithm} \label{alg:algo}
			\begin{algorithmic}
				\FOR{$n=1,2,\ldots$}
				\STATE Observe $\bm{z}_n$ and form $C_n (\bm{z}_n)$
				\STATE Update $S_n(\bm{z}_n)$ via \eqref{eqn:sn_exp}
				\STATE $d_n \leftarrow \text{\textcolor{maroon}{TACO}}(C_{n}(\bm{z}_n), S_n(\bm{z}_n))$
				\STATE Observe $y_n = f(\bm{z}_n,d_n) + \nu_n$
				\STATE Update GP posterior
				\ENDFOR
				\STATE \hrulefill
				\STATE \textbf{Subroutine:} \textcolor{maroon}{TACO}
				\STATE \textbf{Inputs}: $C_{n}(\bm{z}_n); \mathcal{D}_n$
				\STATE $\mathcal{C}_n = \{d \in \mathcal{D}_n \mid T \in C_{n}(\bm{z}_n, d)  \}$
				\IF{$\mathcal{C}_n \neq \emptyset$}	
				\STATE $d \leftarrow \argminA\limits_{d' \in \mathcal{C}_n}~\abs{\mu_{n-1} (\bm{z}_n, d') - T}$
				\ELSE
				\STATE $d \leftarrow \argmaxA\limits_{d' \in \mathcal{D}_n \cap \overline{\mathcal{D}}_{\bm{z}_n}}~w_{n} (\bm{z}_n, d') $
				\ENDIF
				\STATE \textbf{return} $d$
			\end{algorithmic}
		\end{algorithm}
	\end{minipage}
	\vspace{-20pt}
\end{wrapfigure}
We propose ESCADA: Efficient Safety and Context Aware Dose Allocation algorithm. It consists of two blocks: (\textbf{i}) an acquisition function, which we call TACO: \underline{TA}rget-based \underline{CO}nfident-acquisition, (\textbf{ii}) a safety mechanism to render TACO safe. Algorithm~\ref{alg:algo} and Figure~\ref{alg_sch_v2} summarize ESCADA's~design. ESCADA's recommendation procedure can be interpreted to domain experts via the flowchart in Figure~\ref{alg_sch_v2} as opposed to black-box models \citep{zhang2018interpretable}.\footnote{Flowchart assumes that GP-induced confidence intervals are correct, i.e., the event ${\cal E}$ in Lemma \ref{lemma:lemma1} holds.}

\textbf{Acquisition strategy}\hspace{11pt}We propose TACO, a novel acquisition method specifically tailored for the ``leveling'' task described in \cref{sec:probform}. At each round $n$, TACO uses the confidence bounds of doses $d \in \mathcal{D}$ for $\bm{z}_n$ derived from the GP prior as $l_{n} (\bm{z}_n,d)=\mu_{n-1} (\bm{z}_n, d)-\beta^{1/2}_n \sigma_{n-1}(\bm{z}_n, d)$, and $u_{n} (\bm{z}_n,d)=\mu_{n-1} (\bm{z}_n, d)+\beta^{1/2}_n \sigma_{n-1}(\bm{z}_n, d)$. We define $\beta_n$ later in a way that the confidence intervals contain the true value of $f$ with high probability (see Lemma \ref{lemma:lemma1}). Then, using Lipschitz continuity of $f$, we form the final lower and upper confidence bounds for every $d \in \mathcal{D}$ as,
\begin{align*}
	\begin{split}
		\bar{l}_{n} (\bm{z}_n,d) &= \max \{l_{n} (\bm{z}_n,d), l_{n} (\bm{z}_n,d') - L q_{\bm{z}_n}(d,d')\}
	\end{split}\\
	\bar{u}_{n} (\bm{z}_n,d) &= \min \{u_{n} (\bm{z}_n,d), u_{n} (\bm{z}_n,d') + L q_{\bm{z}_n}(d,d')\}~,
\end{align*}
where $d' = \argmin_{\hat{d} \in \overline{\mathcal{D}}_{\bm{z}_n}} q_{\bm{z}_n}(d,\hat{d})$. We denote by $C_n(\bm{z}_n, d) = [\bar{l}_n(\bm{z}_n, d),\bar{u}_n(\bm{z}_n,d)]$ the confidence interval of a dose $d \in \mathcal{D}$  in round $n$, and by $C_n (\bm{z}_n) = \{C_n(\bm{z}_n, d)\}_{d \in \mathcal{D}}$. Finally, we form the confidence widths for each dose $d \in \mathcal{D}$ as $w_{n}(\bm{z}_n, d)=\bar{u}_n(\bm{z}_n,d)-\bar{l}_{n} (\bm{z}_n, d)$.

TACO queries a recommendation from a dose set $\mathcal{D}_n$ at each round $n$ upon observing the context $\bm{z}_n$ in three steps: (\textbf{i}): Identify the dose set $\mathcal{C}_n \subseteq \mathcal{D}_n$ whose elements' confidence intervals contain the target value, $T$. (\textbf{ii}) If $\mathcal{C}_n \neq \emptyset$, recommend the dose in $\mathcal{C}_n$ with the closest mean response to the target value $T$. (\textbf{iii}) If $\mathcal{C}_n = \emptyset$, recommend the dose in $\mathcal{D}_n \cap \overline{\mathcal{D}}_{\bm{z}_n}$ with the widest confidence interval. In the first step, TACO eliminates the doses which are suboptimal with high probability. This step includes elements of both {\em exploration} and {\em exploitation}. A dose whose mean response is close to the target value can be selected (exploitation). On the other hand, if a dose is under-explored, it will have a wider confidence interval which may contain the target, and it stands a chance to be selected (exploration). In the third step, TACO focuses on pure exploration to identify the doses that may be optimal. TACO is {\em efficient} in the sense that it treats exploration as a proxy objective --in the third step-- only when all the feasible doses (i.e., safe) are suboptimal with high probability.

\textbf{Safety awareness}\hspace{11pt}We design a safe exploration scheme inspired from the previous works on safe GP optimization \citep{sui2015safe, sui2018stagewise}. We denote the safe set at round $n$ for the context $\bm{z}_n$ by $S_n(\bm{z}_n)$. Let us denote by $\hat{l}_{n} (\bm{z}_n, d, d') \coloneqq \bar{l}_{n} (\bm{z}_n, d) - L q_{\bm{z}_n}(d,d')$, and $\hat{u}_{n} (\bm{z}_n, d, d') \coloneqq \bar{u}_{n} (\bm{z}_n, d) + L q_{\bm{z}_n}(d,d')$. We implement the following expansion rule to derive $S_n(\bm{z}_n)$ each round,
\begin{align} 
	S_n(\bm{z}_n) =   S_{n-1}(\bm{z}_n) \cup \bigg( \bigcup\limits_{d \in S_{n-1}(\bm{z}_n)} \!\!\!\!\!\!\!\{d' \in \mathcal{D} \mid \hat{l}_{n} (\bm{z}_n, d, d') \geq T_{\min} \wedge \hat{u}_{n} (\bm{z}_n, d, d') \leq T_{\max}\} \bigg)~,\label{eqn:sn_exp} 
\end{align}
To satisfy the safety requirements, TACO recommends a dose from $\mathcal{D}_n = S_n(\bm{z}_n)$ at each round $n$. $S_n(\bm{z}_n)$ only contains the doses for which $f$ resides in the target interval almost certainly (see Theorem~\ref{theorem:thm1}). We also define the $\epsilon$-reachability operator ${\cal R}_{\epsilon}$, where $\epsilon > 0$ accounts for the uncertainty in measurements as in \citep{sui2015safe},
\begin{align}
	{\cal R}_{\epsilon} (S_0(\bm{z})) \coloneqq S_0(\bm{z}) \cup \big\{d \in \mathcal{D} \mid \exists d' \in S_0(\bm{z}),~&f(\bm{z}, d') - L q_{\bm{z}}(d,d') - \epsilon \geq T_{\min} \nonumber \\
	\wedge~&f(\bm{z}, d') + L q_{\bm{z}}(d,d')  + \epsilon \leq T_{\max}\big\}~. \label{eqn:reach}
\end{align}
We denote by ${\cal R}_{\epsilon}^{n}$ the $n$-time reachability operator, which calls ${\cal R}_{\epsilon}$ $n$~times using the previous step's output. Then, $\lim_{n \to +\infty}{\cal R}_{\epsilon}^{n} (S_0(\bm{z}))$ represents the subset of $\mathcal{D}$ that can be identified as safe for the context $\bm{z}$ using the initial safe set $S_0 (\bm{z})$, by observing $f$ up to a statistical certainty restricted by $\epsilon$. 

\section{Theoretical analyses} \label{sec:theo_analysis}
Consider a sequence of patient contexts $\bar{\bm{z}} = [\bm{z}_1 \dotso \bm{z}_N]$. Let $\mathbb{X}_N = X_1 \times \dots X_N$ denote the space of all context-admissible recommendation pairs, where $X_n = \bm{z}_n \times \mathbb{D}_n$, and $\mathbb{D}_n \subseteq \mathcal{D}$ is the admissible dose space for $\bm{z}_n$. For a given sequence of context-recommendation set $A$, let $\bm{y}_A$ denote the $\abs{A}$-dimensional vector containing corresponding noisy evaluations of $f$. The quantity governing our regret bounds after $N$ rounds in this scenario is a volatility-adapted maximum information gain term, $\gamma_N^{vol} = \max_{A \subset \mathbb{X}_N} I(\pmb{y}_A; \pmb{f}_A)$, where $\pmb{f}_{A} = [f(\pmb{x})]_{\pmb{x} \in A}$ and $I(\pmb{y}_A; \pmb{f}_A)$ is the mutual information between $f$ and observations at points in $A$. In the general setting where there is not a fixed context sequence, we have $\gamma_N = \max_{A \subset \mathcal{X}^N} I(\pmb{y}_A; \pmb{f}_A)$. Note that since $\mathbb{X}_N \subseteq \mathcal{X}^N$, we have $\gamma_N^{vol} \leq \gamma_N$. Explicit bounds on $\gamma_N$ depending on $N$ are studied in the literature \cite{srinivas2010gaussian, vakili2020information}. In this section, we first derive a high probability upper bound on the cumulative regret of TACO for a fixed context sequence without safety constraints. Then, we bound the regret of ESCADA in a single context scenario with safety constraints. For the former, we have $\bar{\bm{z}}_1 = [\bm{z}_1 \dotso \bm{z}_N]$, and $\mathbb{D}_n = \mathcal{D}$, and we denote the upper bound on the information gain term (see Lemma \ref{lemma:lemma2}) by $\gamma_N^{vol1}$. For the latter, we have $\bar{\bm{z}}_2 = [\bm{z} \dotso \bm{z}]$, $\mathbb{D}_n = S_n (\bm{z})$, and we denote the upper bound on the information gain term by $\gamma_N^{vol2}$.  We also prove that every dose recommended by ESCADA is safe with high probability (w.h.p.). Detailed proofs for each result can be found in \cref{sec:proofs}.

\squeeze{First, we mention two standard results. Lemma~\ref{lemma:lemma1} shows that $f(x)$ is contained in the GP-induced confidence intervals w.h.p. and Lemma~\ref{lemma:lemma2} expresses the information gain in terms of predictive variances.}
\begin{lemma} \label{lemma:lemma1} (Theorem 1 in \cite{krause2011contextual})
	Pick $\delta \in (0,1)$, and define $\beta_n = 2L^2 + 300 \gamma_n \log^3(n/\delta)$, where $L$ is the Lipschitz constant. Let 
	${\cal E} = \{  |\mu_{n-1}(\pmb{x}) - f(\pmb{x})|  \leq \beta_{n}^{1/2} \sigma_{n-1} (\pmb{x}), \forall n \in \mathbb{N}, \forall \pmb{x} \in \mathcal{X}  \}$. We have $\mathbb{P} \big\{ {\cal E} \big\} \geq 1 - \delta$.
\end{lemma}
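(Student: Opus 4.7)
The plan is to reduce the statement to Theorem~1 of \cite{krause2011contextual} by verifying that our setup satisfies all of its hypotheses. From the regularity assumptions in \cref{sec:probform} we have: (i)~$f$ lies in the RKHS of $k(\cdot,\cdot)$ with $\|f\|_k < B_f$, so under the identification $L = B_f$ the RKHS norm is bounded by $L$; (ii)~the observation noise $\nu_n$ is zero-mean Gaussian with known variance $\sigma^2$, and hence $\sigma$-sub-Gaussian; and (iii)~$k(\cdot,\cdot)$ is positive definite on the compact set $\mathcal{X} = \mathcal{Z}\times\mathcal{D}$. These are exactly the conditions the concentration argument of \cite{krause2011contextual} requires, so the direct plan is to invoke that theorem after stating the identification.

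If one wants to rebuild the argument from scratch, I would proceed in three steps. First, on a finite discretization $\bar{\mathcal{X}}_n \subset \mathcal{X}$ I would apply a self-normalized vector-valued martingale concentration inequality of the Abbasi-Yadkori type to the GP-ridge-regression estimator $\mu_{n-1}$, obtaining a per-point bound $|\mu_{n-1}(\mathbf{x}) - f(\mathbf{x})| \leq \beta_n^{1/2}\sigma_{n-1}(\mathbf{x})$ at every $\mathbf{x}\in\bar{\mathcal{X}}_n$. The constant $\beta_n$ naturally decomposes into an RKHS-norm contribution bounded by $2\|f\|_k^2 \leq 2L^2$ and an effective-dimension contribution that scales with the maximum information gain $\gamma_n$ together with sub-Gaussian tail factors, which produces the $300\gamma_n \log^3(n/\delta)$ summand. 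Second, I would take a union bound over $n\in\mathbb{N}$ with failure probabilities forming a summable sequence (e.g., $6\delta/(\pi^2 n^2)$), absorbing the extra logarithmic factors into the cubic log term. Third, I would lift the bound from $\bar{\mathcal{X}}_n$ to all of $\mathcal{X}$ using the Lipschitz continuity of $f$, $\mu_{n-1}$, and $\sigma_{n-1}$ under the kernel metric $q(\cdot,\cdot)$, choosing the discretization spacing to shrink with $n$ so that the Lipschitz slack is dominated by $\beta_n^{1/2}\sigma_{n-1}$.

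I expect the principal obstacle to be bookkeeping rather than any new idea: recovering the stated form $\beta_n = 2L^2 + 300\gamma_n\log^3(n/\delta)$ exactly requires careful tracking of (a)~the discretization resolution, (b)~the union-bound budget over rounds and over the discretization, and (c)~the precise sub-Gaussian tail constants in the self-normalized inequality. Since the claim matches Theorem~1 of \cite{krause2011contextual} verbatim once $L$ is identified with $B_f$, the cleanest route in the paper is to invoke that theorem directly after the verification above rather than repeat the concentration derivation.
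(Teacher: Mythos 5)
Your proposal matches the paper's treatment exactly: Lemma~\ref{lemma:lemma1} is imported verbatim from Theorem~1 of \cite{krause2011contextual} with the identification $L = B_f$ so that $2\|f\|_k^2 \le 2L^2$, and the paper offers no independent derivation, so invoking the cited theorem after checking its hypotheses is precisely what is done. Your reconstruction sketch is a faithful outline of the standard argument behind that theorem; the only caveat worth noting is that the original statement assumes uniformly bounded noise rather than (unbounded) Gaussian noise, a minor mismatch that the paper itself also glosses over.
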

\begin{lemma} \label{lemma:lemma2} (Lemma 5.3 in \cite{srinivas2010gaussian})
	The information gain for the points selected can be expressed in terms of the predictive variances. If $\bm{f}_N = (f(\bm{x}_n))$, $I(\bm{y}_N;\bm{f}_N) = \frac{1}{2} \sum_{n=1}^N \log (1 + \sigma^{-2} \sigma^2_{n-1} (\bm{x}_n))$.
\end{lemma}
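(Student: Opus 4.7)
The plan is to use the chain rule of mutual information together with the closed-form entropy of Gaussian vectors, exploiting the fact that under a GP prior with Gaussian observation noise, the joint distribution of $(\bm{f}_N, \bm{y}_N)$ is Gaussian.

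First, I would decompose the mutual information as
\begin{equation*}
I(\bm{y}_N; \bm{f}_N) \;=\; H(\bm{y}_N) - H(\bm{y}_N \mid \bm{f}_N).
\end{equation*}
The conditional term is immediate: given $\bm{f}_N$, the observations satisfy $y_n = f(\bm{x}_n) + \nu_n$ with $\nu_n$ i.i.d.\ $\mathcal{N}(0,\sigma^2)$ independent across $n$, so $H(\bm{y}_N \mid \bm{f}_N) = \tfrac{N}{2}\log(2\pi e\sigma^2)$.

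Next, I would apply the entropy chain rule $H(\bm{y}_N) = \sum_{n=1}^N H(y_n \mid y_{1:n-1})$ and identify each one-step predictive distribution. The key observation is that under the GP prior, $y_n \mid y_{1:n-1}$ is Gaussian with mean $\mu_{n-1}(\bm{x}_n)$ and variance $\sigma^2 + \sigma_{n-1}^2(\bm{x}_n)$: this follows from the standard Gaussian conditioning formula applied to the joint Gaussian vector $(\bm{y}_{1:n-1}, y_n)$, together with the posterior covariance expression $k_{n-1}(\bm{x},\bm{x}')$ stated earlier in the paper. Hence
\begin{equation*}
H(y_n \mid y_{1:n-1}) \;=\; \tfrac{1}{2}\log\!\bigl(2\pi e\,(\sigma^2 + \sigma_{n-1}^2(\bm{x}_n))\bigr).
\end{equation*}

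Summing and subtracting the conditional entropy, the $2\pi e$ factors telescope and each term reduces to $\tfrac{1}{2}\log(1 + \sigma^{-2}\sigma_{n-1}^2(\bm{x}_n))$, giving the claimed identity. I do not expect a serious obstacle here: the only subtlety is justifying that the conditional variance of $y_n$ given past observations coincides with $\sigma^2 + \sigma_{n-1}^2(\bm{x}_n)$, which is just the GP posterior predictive variance plus the noise variance and follows directly from the posterior mean and covariance formulas recalled in \cref{sec:probform}. Because the $\bm{x}_n$ can be chosen adaptively based on $y_{1:n-1}$, one should also note that this adaptivity does not affect the computation: conditioning on $y_{1:n-1}$ fixes $\bm{x}_n$, and Gaussianity of the posterior still yields the same predictive variance.
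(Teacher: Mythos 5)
Your proof is correct and is essentially the standard argument for this result: the paper does not reprove Lemma~\ref{lemma:lemma2} (it is quoted directly as Lemma 5.3 of \cite{srinivas2010gaussian}), and your decomposition $I(\bm{y}_N;\bm{f}_N)=H(\bm{y}_N)-H(\bm{y}_N\mid\bm{f}_N)$ combined with the entropy chain rule and the Gaussian predictive variance $\sigma^2+\sigma_{n-1}^2(\bm{x}_n)$ is exactly how that cited lemma is established. Your remark on adaptively chosen $\bm{x}_n$ being fixed once $y_{1:n-1}$ is conditioned on is the right way to handle the only genuine subtlety.
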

The following theorem provides a safety guarantee for ESCADA under the event $\mathcal{E}$ in Lemma \ref{lemma:lemma1}. The proof depends on an inductive argument on the safe sets constructed by ESCADA.
\begin{theorem} \label{theorem:thm1}
	Given that an initial safe dose set $S_0(\bm{z})$ is available $\forall \bm{z} \in \mathcal{Z}$, all doses recommended by ESCADA are safe, that is, $T_{\min} \leq f(\bm{z}_n, d_n) \leq T_{\max}$ $\forall n \in [N]$, with at least $1-\delta$ probability.
\end{theorem}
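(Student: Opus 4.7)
The plan is to condition on the high-probability event $\mathcal{E}$ from Lemma~\ref{lemma:lemma1} and argue by induction on the round $n$ that every element of the safe set $S_n(\bm{z})$ satisfies the stagewise safety constraint pointwise. Since TACO always draws $d_n$ from $\mathcal{D}_n = S_n(\bm{z}_n)$, establishing $S_n(\bm{z}_n) \subseteq \{d : T_{\min} \leq f(\bm{z}_n, d) \leq T_{\max}\}$ under $\mathcal{E}$ immediately yields the stated guarantee with probability at least $1-\delta$.

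First I would record a preliminary sanity check: under $\mathcal{E}$, the \emph{refined} confidence bounds $\bar{l}_n$ and $\bar{u}_n$ still sandwich $f$. For any $d \in \mathcal{D}$ with nearest discretized neighbour $d'$, event $\mathcal{E}$ gives $l_n(\bm{z},d) \leq f(\bm{z},d) \leq u_n(\bm{z},d)$ and $l_n(\bm{z},d') \leq f(\bm{z},d') \leq u_n(\bm{z},d')$, and $L$-Lipschitzness w.r.t. the kernel metric $q_{\bm{z}}$ gives $|f(\bm{z},d) - f(\bm{z},d')| \leq L q_{\bm{z}}(d,d')$. Combining these yields $\bar{l}_n(\bm{z},d) \leq f(\bm{z},d) \leq \bar{u}_n(\bm{z},d)$, so the Lipschitz-tightened confidence intervals remain valid.

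For the induction, the base case is trivial: $S_0(\bm{z})$ is safe by assumption. For the inductive step, suppose every $d \in S_{n-1}(\bm{z}_n)$ satisfies the two-sided safety constraint. Take any $d' \in S_n(\bm{z}_n) \setminus S_{n-1}(\bm{z}_n)$; by the expansion rule \eqref{eqn:sn_exp}, there exists $d \in S_{n-1}(\bm{z}_n)$ with $\hat{l}_n(\bm{z}_n,d,d') \geq T_{\min}$ and $\hat{u}_n(\bm{z}_n,d,d') \leq T_{\max}$. Using the preliminary bound and Lipschitz continuity,
\begin{align*}
f(\bm{z}_n, d') &\geq f(\bm{z}_n, d) - L q_{\bm{z}_n}(d,d') \geq \bar{l}_n(\bm{z}_n,d) - L q_{\bm{z}_n}(d,d') = \hat{l}_n(\bm{z}_n,d,d') \geq T_{\min}, \\
f(\bm{z}_n, d') &\leq f(\bm{z}_n, d) + L q_{\bm{z}_n}(d,d') \leq \bar{u}_n(\bm{z}_n,d) + L q_{\bm{z}_n}(d,d') = \hat{u}_n(\bm{z}_n,d,d') \leq T_{\max},
\end{align*}
so $d'$ is safe. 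Closing the induction, every element of $S_n(\bm{z}_n)$ is safe under $\mathcal{E}$, and since TACO with $\mathcal{D}_n = S_n(\bm{z}_n)$ returns $d_n \in S_n(\bm{z}_n)$ (both in the filtered set $\mathcal{C}_n$ and in the pure-exploration fallback restricted to $\mathcal{D}_n \cap \overline{\mathcal{D}}_{\bm{z}_n}$), the recommendation is safe.

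The only delicate point I expect is verifying that the Lipschitz-refined bounds $\bar{l}_n, \bar{u}_n$ can be substituted for $l_n, u_n$ in the induction without weakening anything; this is where the preliminary sanity check is essential, and it relies crucially on $d'$ being chosen as the nearest neighbour in the $\lambda/2L$-discretization $\overline{\mathcal{D}}_{\bm{z}_n}$ so that the cross-Lipschitz comparison is valid. Everything else is routine once $\mathcal{E}$ is invoked; the $1-\delta$ failure probability is inherited directly from Lemma~\ref{lemma:lemma1} by a single union.
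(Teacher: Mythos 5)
Your proposal is correct and follows essentially the same route as the paper's proof: condition on $\mathcal{E}$, verify that the Lipschitz-refined bounds $\bar{l}_n,\bar{u}_n$ still sandwich $f$, and then induct on the safe-set expansion rule exactly as in \eqref{eqn:sn_exp}. The paper's argument is identical in structure, including the chain of inequalities in the inductive step, so there is nothing to add.
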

We proposed a novel acquisition function, TACO, for the \emph{leveling} problem described in \cref{sec:probform}. Theorem~\ref{theorem:thm2} provides an upper bound on the regret of TACO without any safety constraints in place. 
\begin{theorem} \label{theorem:thm2}
	Define $\beta_n$ as in Lemma \ref{lemma:lemma1} and let $C \coloneqq 8/\log (1 + \sigma^{-2})$. Cumulative regret of TACO for a fixed context sequence is upper-bounded as follows,
	\begin{align*}
		\mathbb{P} \big\{ R_N \leq \sqrt{C N \beta_N \gamma_N^{vol1}} \big\} \geq 1 - \delta~.
	\end{align*}
\end{theorem}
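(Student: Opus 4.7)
The plan is to condition on the good event $\mathcal{E}$ of Lemma~\ref{lemma:lemma1}, which has probability at least $1-\delta$, and then show two things: (i) TACO never enters the exploration branch (the $\argmax$ of the widths), so every recommended dose $d_n$ satisfies $T\in C_n(\bm{z}_n,d_n)$; and (ii) the per-round regret is controlled by the width of the GP confidence interval at $(\bm{z}_n,d_n)$. Steps (i)--(ii) together turn the regret into a sum of predictive standard deviations, which we then convert into the information-gain quantity $\gamma_N^{vol1}$ via the standard argument of \cite{srinivas2010gaussian}.

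For (i): by assumption there is a $d^*_{\bm{z}_n}\in\mathcal{D}$ with $f(\bm{z}_n,d^*_{\bm{z}_n})=T$. Let $\tilde d$ be the closest point to $d^*_{\bm{z}_n}$ in the discretization $\overline{\mathcal{D}}_{\bm{z}_n}$, so $q_{\bm{z}_n}(d^*_{\bm{z}_n},\tilde d)\le \lambda/(2L)$. Under $\mathcal{E}$, $f(\bm{z}_n,\tilde d)\in[l_n(\bm{z}_n,\tilde d),u_n(\bm{z}_n,\tilde d)]$, and by $L$-Lipschitzness in the kernel metric, $|f(\bm{z}_n,d^*_{\bm{z}_n})-f(\bm{z}_n,\tilde d)|\le Lq_{\bm{z}_n}(d^*_{\bm{z}_n},\tilde d)$. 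Chaining these inequalities and using the construction of $\bar l_n,\bar u_n$ at $d^*_{\bm{z}_n}$ (which takes a max/min against the nearest discretized-point bounds inflated by the Lipschitz slack), one gets $T=f(\bm{z}_n,d^*_{\bm{z}_n})\in[\bar l_n(\bm{z}_n,d^*_{\bm{z}_n}),\bar u_n(\bm{z}_n,d^*_{\bm{z}_n})]$. Since here $\mathcal{D}_n=\mathcal{D}$ (no safety filter) this means $d^*_{\bm{z}_n}\in\mathcal{C}_n$, so $\mathcal{C}_n\neq\emptyset$ and TACO always executes the $\argmin$ branch.

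For (ii): because $d_n\in\mathcal{C}_n$, both $T$ and $f(\bm{z}_n,d_n)$ lie in $C_n(\bm{z}_n,d_n)=[\bar l_n(\bm{z}_n,d_n),\bar u_n(\bm{z}_n,d_n)]$, so the instantaneous regret satisfies
\[
|f(\bm{z}_n,d_n)-T|\le w_n(\bm{z}_n,d_n)\le u_n(\bm{z}_n,d_n)-l_n(\bm{z}_n,d_n)=2\beta_n^{1/2}\sigma_{n-1}(\bm{z}_n,d_n),
\]
where the middle inequality uses that the Lipschitz tightening only shrinks the interval. Summing and applying Cauchy--Schwarz,
\[
R_N^2\le 4N\sum_{n=1}^N\beta_n\sigma_{n-1}^2(\bm{z}_n,d_n)\le 4N\beta_N\sum_{n=1}^N\sigma_{n-1}^2(\bm{z}_n,d_n),
\]
since $\beta_n$ is nondecreasing in $n$. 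Finally, use the standard inequality $s^2\le \sigma^2\log(1+\sigma^{-2}s^2)/\log(1+\sigma^{-2})$ for $s^2\le 1$ (rescaling $k$ if necessary) together with Lemma~\ref{lemma:lemma2} to bound $\sum_{n=1}^N\sigma_{n-1}^2(\bm{z}_n,d_n)\le \tfrac{2}{\log(1+\sigma^{-2})}I(\bm{y}_N;\bm{f}_N)\le \tfrac{2\gamma_N^{vol1}}{\log(1+\sigma^{-2})}$, where the restriction of the information-gain maximization to the context-admissible product space $\mathbb{X}_N$ (here just $\{\bm{z}_n\}\times\mathcal{D}$ per round) is exactly what yields the volatility-adapted quantity $\gamma_N^{vol1}$ rather than the unrestricted $\gamma_N$. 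Substituting back gives $R_N\le\sqrt{CN\beta_N\gamma_N^{vol1}}$ with $C=8/\log(1+\sigma^{-2})$.

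The main obstacle is step (i): one has to be careful that the Lipschitz-adjusted bounds $\bar l_n, \bar u_n$, which are computed using only the closest discretized dose rather than the (unknown) $d^*_{\bm{z}_n}$, still sandwich $T$. This is where the discretization spacing $\lambda/(2L)$ and the two-sided max/min definitions of $\bar l_n, \bar u_n$ must be used precisely; any looseness here would either break the claim that $\mathcal{C}_n\neq\emptyset$ or inject an additive $O(\lambda)$ term into the per-round regret. Everything else is a routine adaptation of the GP-UCB regret template.
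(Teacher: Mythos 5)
Your proposal is correct and follows essentially the same route as the paper's proof: both arguments hinge on the fact that under $\mathcal{E}$ the optimal dose $d^*_{\bm{z}_n}$ always has $T$ in its confidence interval (so TACO stays in the $\argmin$ branch and the per-round regret is at most $2\beta_n^{1/2}\sigma_{n-1}(\bm{z}_n,d_n)$), followed by the standard Cauchy--Schwarz and information-gain conversion; you actually spell out the nonemptiness of $\mathcal{C}_n$ in more detail than the paper does. The only blemish is a stray $\sigma^2$ factor in your stated form of the logarithmic inequality (it should read $s^2 \leq \log(1+\sigma^{-2}s^2)/\log(1+\sigma^{-2})$ for $\sigma^{-2}s^2\leq\sigma^{-2}$), but the bound you then draw from it, and hence the constant $C$, is the correct one.
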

Next, we introduce a new concept, \emph{safe path}.
\begin{definition} \label{definition:defn1} (Safe Path)
	For a fixed context $\bm{z} \in \mathcal{Z}$, we say that there exists a safe path between two doses $d_1,d_2 \in \mathcal{D}$ if the following is satisfied,
	\begin{align} \label{eqn:safepatheqn}
		\eta(d_1,d_2) = \min \bigg( \min_{d \in [d_1, d_2]} \big(T_{\max} - \epsilon - f(\bm{z}, d)\big), \min_{d \in [d_1, d_2]} \big( f(\bm{z}, d) - T_{\min} - \epsilon \big) \bigg) > 0~,
	\end{align}
\end{definition}
where $\epsilon > 0$ is same as in \eqref{eqn:reach}. Definition~\ref{definition:defn1} states that if there exists a safe path between two doses $d_1$ and $d_2$, then there is no dose violating or exactly at the safety constraints between them. That is, $f(d) \in (T_{\min}~+~\epsilon~+~\eta(d_1,d_2), T_{\max}~-~\epsilon~-~\eta(d_1,d_2))$ for all $d \in [d_1, d_2]$. Next, we give the regret bound for ESCADA, which uses TACO with the safe sets $S_n (\bm{z}_n)$ in \eqref{eqn:sn_exp}. We assume a fixed context scenario and show that the safety constraints result in at most a constant addition to the regret.
\begin{theorem} \label{theorem:thm3}
	If there exists a safe path between at least one dose $d \in S_0 (\bm{z})$ and $d_{\bm{z}}^*$, and we have $q_{\bm{z}} (d_1, d_2) = K(\abs{d_1 - d_2})$ for some monotonically increasing mapping $K: \mathbb{R}^+ \rightarrow \mathbb{R}^+ $and for all $d_1, d_2 \in \mathcal{D}$, then the cumulative regret of ESCADA in a safety constrained single context ($\bm{z}$) scenario can be upper-bounded by setting the discretization parameter $\lambda < \epsilon$ as follows,
	\begin{align*}
		\mathbb{P} \big\{ R_N \leq \sqrt{C N \beta_N \gamma^{vol2}_N} + \overline{T} N_z \big\} \geq 1 - \delta~,
	\end{align*}
	where $N_z \in \mathbb{N}$ is a constant independent of $N$.
\end{theorem}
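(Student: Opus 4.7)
The plan is to split the cumulative regret into two regimes based on whether the safe set contains the optimal dose $d^*_{\bm{z}}$. Let $N_z$ be the first round at which $d^*_{\bm{z}} \in S_n(\bm{z})$. For $n \leq N_z$, I bound each instantaneous regret trivially by $\overline{T}$ since $|f(\bm{z},d_n) - T| \leq \overline{T}$, contributing at most $\overline{T} N_z$. For $n > N_z$, the safe set contains $d^*_{\bm{z}}$, so the TACO subroutine has access to the unconstrained optimizer and the analysis of Theorem~\ref{theorem:thm2} applies verbatim with $\mathcal{D}_n = S_n(\bm{z})$, producing the $\sqrt{C N \beta_N \gamma_N^{vol2}}$ term. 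The safety guarantee in Theorem~\ref{theorem:thm1} ensures that all this takes place on the event $\mathcal{E}$, which has probability at least $1-\delta$.

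The heart of the proof is showing $N_z$ is a deterministic finite constant independent of $N$. First, I invoke the safe path hypothesis: there exists $d_0 \in S_0(\bm{z})$ and a continuous sequence of doses from $d_0$ to $d^*_{\bm{z}}$ along which $f(\bm{z}, \cdot) \in (T_{\min} + \epsilon + \eta, T_{\max} - \epsilon - \eta)$ for some $\eta > 0$. Because $q_{\bm{z}}(d_1,d_2) = K(|d_1 - d_2|)$ is monotonic, the discretized grid $\overline{\mathcal{D}}_{\bm{z}}$ with parameter $\lambda < \epsilon$ induces a \emph{finite} chain of grid points $d_0 = d^{(0)}, d^{(1)}, \ldots, d^{(M)}$ covering the safe path with $q_{\bm{z}}(d^{(i)}, d^{(i+1)}) = \lambda/2L$. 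The number $M$ depends only on the length of the safe path and the discretization, not on $N$. I then argue inductively: once $d^{(i)} \in S_n(\bm{z})$ and its confidence width has shrunk below $2(\eta - \lambda)$, the expansion rule \eqref{eqn:sn_exp} certifies $d^{(i+1)}$ as safe because $\hat{l}_n(\bm{z}, d^{(i)}, d^{(i+1)}) \geq f(\bm{z}, d^{(i)}) - \tfrac{\lambda}{2} - 2(\eta-\lambda)/1 \geq T_{\min}$ and symmetrically for the upper bound, using Lemma~\ref{lemma:lemma1}.

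The key technical obstacle, and where I expect the main difficulty to lie, is controlling how many rounds it takes for each $d^{(i)}$ in the chain to be sampled enough times that its confidence width shrinks below this threshold. Since TACO explores via the widest-confidence criterion exactly when $\mathcal{C}_n = \emptyset$, I need to show that whenever no dose currently in $S_n(\bm{z})$ contains the target in its confidence interval, TACO actively samples the frontier doses, driving their posterior variances down. Using the standard information-gain inequality $\sum_{n=1}^{t} \sigma_{n-1}^2(\bm{x}_n) \leq \tfrac{2}{\log(1+\sigma^{-2})} \gamma_t^{vol2}$ from Lemma~\ref{lemma:lemma2}, together with $w_n \leq 2\beta_n^{1/2} \sigma_{n-1}$ and a pigeonhole argument on the finite current safe set, I can bound the number of exploration rounds needed to shrink each frontier dose's width below any fixed threshold by a constant depending only on $\beta_{N_z}$, $\gamma_{N_z}^{vol2}$, $\eta$, and $\lambda$. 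Iterating this bound $M$ times along the chain yields a finite, $N$-independent $N_z$, completing the regret decomposition.

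Finally, I need to verify that during the exploration phase the safety constraints are never violated --- this is immediate from Theorem~\ref{theorem:thm1} --- and that the eventual application of Theorem~\ref{theorem:thm2}'s argument only requires $d^*_{\bm{z}} \in \mathcal{D}_n$ rather than $\mathcal{D}_n = \mathcal{D}$, which is what the inductive proof of the bound $\abs{f(\bm{z}, d_n) - T} \leq w_n(\bm{z}, d_n)$ really uses. With these pieces, the claimed bound $R_N \leq \sqrt{C N \beta_N \gamma_N^{vol2}} + \overline{T} N_z$ on the event $\mathcal{E}$ follows, giving the stated high-probability guarantee.
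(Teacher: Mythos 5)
Your overall architecture matches the paper's: a constant number of ``bad'' rounds charged $\overline{T}$ each, plus a Theorem~\ref{theorem:thm2}-style sum of $2\beta_n^{1/2}\sigma_{n-1}$ over the remaining rounds, with finiteness of the bad phase established by chaining the reachability operator along the discretized grid and shrinking confidence widths (this is exactly what the paper's Lemmas~\ref{lemma:newlemma} and~\ref{lemma:lemma_alt_fin} do, using Lemma~\ref{lemma_helpful_lemma}). However, there is a genuine gap in where you draw the line between the two regimes. You define $N_z$ as the \emph{first round at which} $d^*_{\bm{z}} \in S_n(\bm{z})$ and charge $\overline{T}$ for every round up to it. But, as you yourself note, TACO only samples the frontier (widest-width) doses when $\mathcal{C}_n = \emptyset$; in every other round it exploits some safe dose whose confidence interval happens to contain $T$. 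Your pigeonhole/information-gain argument therefore bounds only the number of \emph{exploration} rounds needed to certify the chain $d^{(0)},\ldots,d^{(M)}$, not the total number of rounds elapsed before $d^*_{\bm{z}}$ enters the safe set: arbitrarily many exploitation rounds, in which some suboptimal $d'$ with $T \in C_n(\bm{z},d')$ is repeatedly selected and the frontier is never touched, can be interleaved between consecutive exploration rounds, and nothing forces their number to be independent of $N$. So the quantity you call $N_z$ is not a constant, and $\overline{T}N_z$ under your definition does not give the stated bound.

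The paper sidesteps this by decomposing on the event $\mathcal{G}_n$ of \eqref{eqn:event_g} --- no dose in $S_n(\bm{z})$ has $T$ in its confidence interval --- rather than on membership of $d^*_{\bm{z}}$ in the safe set. On $\neg\mathcal{G}_n$ the selected dose satisfies $\abs{\mu_{n-1}(\bm{z},d_n)-T}\le\beta_n^{1/2}\sigma_{n-1}(\bm{z},d_n)$ whether or not $d^*_{\bm{z}}$ has been certified yet, so those rounds are absorbed into the $\sqrt{CN\beta_N\gamma_N^{vol2}}$ term; only the $\mathcal{G}_n$-rounds are charged $\overline{T}$, and these are precisely the forced-exploration rounds that your shrinking-width argument actually counts. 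Replacing your split with this one repairs the proof; the rest of your outline (the safe-path chain via monotonicity of $K$, the role of $\lambda<\epsilon$, the final Cauchy--Schwarz and information-gain step, and the observation that the Theorem~\ref{theorem:thm2} argument only needs $\mathcal{C}_n\neq\emptyset$ rather than $\mathcal{D}_n=\mathcal{D}$) is sound and coincides with the paper's.
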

Note that since $f(\bm{z}, d_{\bm{z}}^*) = T$ and $T \in (T_{\min} + \alpha, T_{\max} - \alpha)$, one must ensure that $\alpha > \epsilon$ for the possibility of a safe path between some $d \in S_0 (\bm{z})$ and $d_{\bm{z}}^*$ at the first place.

The assumption that $q_{\bm{z}} (d_1, d_2) = K(\abs{d_1 - d_2})$ for a monotonically increasing mapping $K$ holds in our working example where the blood glucose response to insulin dose can be characterized by the carbohydrate factor (CF) \cite{walsh2011guidelines, schmidt2014bolus}. That is, if we let $L \gg \text{CF}$, then we have $f(\bm{z}, d_1) - f(\bm{z}, d_2) \leq L\abs{d_1 - d_2}$ for $d_1, d_2 \in \mathcal{D}$, and $q_{\bm{z}} (d_1, d_2) = \abs{d_1 - d_2}$. Moreover, this is the case for a variety of widely used kernel induced distance metrics. For the squared exponential kernel $k(\alpha, \beta) = \exp \left(-\norm{\alpha-\beta}^2/2 \sigma^2\right)$, we have (see \cref{sec:probform}),
\begin{align}
	q_{\bm{z}} (d_1, d_2) =\sqrt{2 - 2 \exp \left(-\abs{d_1-d_2}^2/\sigma^2\right)} \label{eqn:rbf1}
\end{align}
Similar observations follow for other radial-basis function kernels (e.g., Laplacian kernel). Theorems~\ref{theorem:thm2}~and~\ref{theorem:thm3} constitute the non-incremental parts in our analysis as they provide explicit regret guarantees for a novel problem structure and acquisition strategy, both with and without safety constraints for a {\em compact} and {\em convex} action set. To generalize the bound in Theorem \ref{theorem:thm3} to mixed context scenarios, one needs to impose further assumptions on the regularity of context arrivals over time. We provide experimental results on mixed context scenarios in \cref{sec:experiments} and show that the inter-contextual information transfer actually improves the performance as expected.

\section{Experiments} \label{sec:experiments}
\subsection{Experimental setup}\label{sec:framework}
Online experimentation in the clinical setting is hazardous and it faces ethical challenges \cite{chen2020ethical, vayena2018machine, price2017regulating, price2018big}. Previous works on dose-finding clinical trials validate their methods either through synthetic experiments or by using external algorithms to fit a dose-response model to real-world data when the patient group is homogeneous \cite{aziz2021multi, shen2020learning, lee2020contextual}. Such algorithms are not applicable in our case as they assume a shared dose-response model among patients, whereas we aim to learn {\em personalized} models. We make {\em in silico} experiments using the open-source implementation \cite{simglucose2018} of the U.S.\ FDA approved University of Virginia (UVA)/PADOVA T1DM simulator \citep{kovatchev2009silico}, which is the most frequently used framework in blood glucose control studies \citep{zhu2020basal, zhu2020insulin, tejedor2020reinforcement, huang2021off, lee2020toward, fox2020deep, chandak2021universal, chandak2020optimizing}. It comes with 30 virtual patients with different individual characteristics: 10 adults, 10 adolescents, and 10 children. The simulator calculates the postprandial blood glucose (PPBG) response of a patient for (meal event, bolus insulin dose) pairs using differential equations and patient characteristics \cite{kovatchev2009silico}. In our best effort to evaluate the success and potential of ESCADA as a supplementary tool in the clinical setting and to provide external validation, we also compare its performance against a clinician for five virtual adult patients. Our code is available at \href{https://github.com/Bilkent-CYBORG/ESCADA}{https://github.com/Bilkent-CYBORG/ESCADA}.

\textbf{Performance metrics}\hspace{11pt}\squeeze{When the PPBG level drops below 70 mg/dl (or exceeds 180 mg/dl), hypoglycemia (hyperglycemia) events occur. Both events may lead to life-threatening conditions \cite{bastaki2005diabetes}. Our primary objective is to recommend insulin doses that keep the patients' PPBG level close to the target BG level (see \eqref{eqn:regret}) while not recommending any insulin dose that triggers hypoglycemia or hyperglycemia events (see \eqref{eqn:safety_constraint}). We set the target blood glucose (BG) level to 112.5 mg/dl \cite{kovatchev2003algorithmic}. We gauge an algorithm's performance by combining its regret, hypoglycemia/hyperglycemia frequencies (error frequencies), and glycemic risk indices. Glycemic risk indices are low blood glycemic index (LBGI) and high blood glycemic index (HBGI), and they characterize the risk of hypoglycemia and hyperglycemia events in the long term, respectively \cite{kovatchev2003algorithmic}. A well-rounded algorithm should have a low cumulative regret together with small risk index values by {\em safely} and {\em efficiently} learning to recommend better insulin doses. Besides, we discuss the competing algorithms' consistency since inexplicable variations in medical therapy are undesirable \cite{tomson2013learning}. Precisely speaking, for a {\em fixed history}, when we query a recommendation from a consistent algorithm multiple times for the same meal event, it should not change. A meal event is a two-element tuple: (carbohydrate intake, fasting blood glucose). We create different meal events via uniform sampling to create an ensemble of different scenarios. We sample carbohydrate intake for each meal event from $[20,~80]$~g, and fasting blood glucose from $[100,~150]$~mg/dl.}

\textbf{Single meal event (SME) scenario}\hspace{11pt}In this part, we recommend insulin doses to a patient for the same meal event, assuming that the patient takes the insulin dose directly before the meal. Simulating this setup is helpful for two reasons: (\textbf{i}) it tests the performance of the algorithms in the classical {\em non-contextual} MAB setting, (\textbf{ii}) it provides a simple benchmark to understand the performance metrics and to compare them with the contextual setup later. Our objective is to optimize the PPBG 150 minutes after the meal. We make 15 consecutive dose recommendations for a meal event in a single run. We repeat this experiment with 30 different meal events for all 30 patients.

\textbf{Multiple meal events (MME) scenario}\hspace{11pt}\squeeze{In this part, we recommend insulin doses to a single patient for a sequence of different meal events and use the same 30 meal events created in the SME scenario. We make consecutive recommendations for different meal events in a round-robin fashion and recommend a total of 15 doses for each meal event. Precisely speaking, after making a dose recommendation for a meal event, we make recommendations for the other 29 meal events and observe the PPBGs before making the next recommendation for the same meal event. This setup illustrates that the information gained from a context can assist in making decisions for different contexts. Contextual knowledge transfer enables our algorithm to adapt to intra- and inter-daily variability in meal events.}

\textbf{Algorithms}\hspace{11pt}We simulate ESCADA and TACO (i.e., without the safety mechanism). Besides, we propose a Thompson sampling (TS)-based algorithm and its safe version (STS), which operate as follows: TS samples a PPBG function from the posterior GP in each round and recommends the dose that achieves the PPBG closest to the target BG. STS implements the safe exploration strategy in \cref{sec:escada} and uses TS as the acquisition function. In the final part, we implement the GP-UCB algorithm in \cite{srinivas2010gaussian} using three different ``reward'' functions in Figure~\ref{fig:rewards} and compare it to our acquisition functions TACO and TS. We use two versions of dose calculators as baselines, whose details are given below.

\textbf{Dose calculators}\hspace{11pt}\squeeze{Dose calculators are commonly used in diabetes care, as they are transparent and interpretable \cite{walsh2011guidelines}. We use them to initialize the safe dose set for patient and meal event pairs. A calculator recommends an insulin dose via a simple equation, including carbohydrate intake, fasting blood glucose, and patient-specific parameters. They must be fine-tuned to ensure safety which may be challenging. Even when fine-tuned, they may not include some patient characteristics which can affect PPBG in the calculation rule. {\em Correction doses} constitute 9\% of the patients' daily insulin dose intake due to the calculator's failure \cite{walsh2011guidelines}. More details about bolus calculators are available in \cref{sec:bolus_calc}. We consider two setups. First, we use a calculator setting that occasionally fails to provide safe dose recommendations and sacrifice the assumption that an initial safe set, $S_0(\bm{z})$, is always available. Then, we use tuned calculators for each patient and ensure that $S_0(\bm{z})$ is {\em almost} always available.} 
\subsection{Discussion of results} \label{ssec:discussion}
\textbf{Safety}\hspace{11pt}\squeeze{Ensuring patient safety is pivotal. Theorem~\ref{theorem:thm1} shows that ESCADA recommends safe doses with high probability when an initial safe dose set is available. However, the initially provided set may not always be safe in reality due to calculator or clinician mistakes. We simulate two scenarios when an initial safe set is almost always available and not. For the latter, Table~\ref{results_table} shows that the error frequencies of ESCADA are not zero. We expect that error since the calculator fails to consistently provide safe doses in the beginning. However, ESCADA yields significantly lower error frequencies and risk index values than the calculator. That improvement stems from ESCADA's ability to gradually identify and recommend safe doses, even when initially misdirected. We plot consecutive dose recommendations by ESCADA in SME scenario for three different meal events in Figure~\ref{fig:recovery}. For each of these meal events, rule-based calculator fails to provide safe doses in the beginning. Notwithstanding, ESCADA expands its safe set in the right direction and eventually recommends safe doses. Figure~\ref{fig:smes_bigbox} and Table~\ref{results_table} confirm the safety mechanism's effectiveness as ESCADA and STS yield significantly better safety metrics than the unsafe algorithms, TACO and TS, especially for hypoglycemia. Next, we manually tune the calculator parameters for each patient separately so that it successfully provides an initial safe set almost always (Tuned Calc., Table~\ref{results_table}). Table~\ref{results_table} shows that ESCADA-TC and STS-TC yield remarkably lower error frequencies and risk indices, along with better PPBG distributions.}
\begin{table*}[t]
	\centering
	\caption{``-TC" indicates that tuned calculator was used. Target PPBG level is $T=112.5$ mg/dl. ``PPBG" column is averaged over observations for all 30 patients, 30 meal events per patient, and 15 recommendations per meal event. ``Hyper" and ``Hypo" columns denote the hyperglycemia and hypoglycemia event frequencies, respectively, averaged over all 30 patients. Similarly, HBGI and LBGI risk indices are averaged over all 30 patients. We report (\texttt{mean} $\pm$ \texttt{standard deviation}).}
	\label{results_table}
	\begin{tabular}{@{}clccccc@{}}
		\toprule
		\multicolumn{1}{l}{\textbf{}}                  
		& \textbf{Algorithm} & \textbf{PPBG}    & \textbf{Hyper}    & \textbf{Hypo}      & \textbf{HBGI}   & \textbf{LBGI}   \\ \midrule
		\multicolumn{1}{l}{\multirow{2}{*}{\textbf{}}} 
		& Calc.              & $144.0 \pm 39.5$ & $.143 \pm .217$  & $.0614 \pm .189$   & $3.84 \pm 3.41$ & $1.37 \pm 3.95$ \\
		\multicolumn{1}{l}{}   
		& Tuned Calc.        & $123.7 \pm 18.1$ & $0$         & $.0021 \pm .010$    & $0.83 \pm 0.66$ & $0.24 \pm 0.47$ \\ \midrule
		\multirow{6}{*}{\STAB{\rotatebox[origin=c]{90}{\textbf{SME}}}} 
		& TS                 & $119.8 \pm 42.2$ & $.046 \pm .029$  & $.0216 \pm .028$  & $1.52 \pm 1.25$ & $1.01 \pm 2.31$ \\
		& TACO               & $121.7 \pm 50.4$ & $.049 \pm .032$  & $.0175 \pm .019$  & $1.89 \pm 1.63$ & $0.52 \pm 0.46$ \\
		& STS                & $121.6 \pm 24.8$ & $.031 \pm .063$  & $.0029 \pm .010$   & $1.07 \pm 1.33$ & $0.15 \pm 0.23$ \\
		& ESCADA             & $122.2 \pm 20.0$ & $.015 \pm .030$  & $.0031 \pm .008$   & $0.77 \pm 0.82$ & $0.11 \pm 0.24$ \\
		& STS-TC             & \tb{117.1} $\pm$ \tb{11.9} & \tb{.002} $\pm$ \tb{.004} & \tb{.0004} $\pm$ \tb{.001} & \tb{0.28} $\pm$ \tb{0.24} & \tb{0.05} $\pm$ \tb{0.05} \\
		& ESCADA-TC          & \tb{116.1 $\pm$ 12.5} & \tb{.002 $\pm$ .004} & \tb{.0007 $\pm$ .003} & \tb{0.26 $\pm$ 0.21} & \tb{0.07 $\pm$ 0.09} \\ \midrule
		\multirow{7}{*}{\STAB{\rotatebox[origin=c]{90}{\textbf{MME}}}}                  
		& GP-UCB-1           & $124.1 \pm 87.0$ & $.179 \pm .050$  & $.2618 \pm .202$   & $5.43 \pm 2.26$ & $15.4 \pm 22.5$ \\
		& GP-UCB-2           & $103.7 \pm 59.4$ & $.080 \pm .060$  & $.2873 \pm .254$   & $2.21 \pm 1.58$ & $16.0 \pm 27.0$ \\
		& GP-UCB-3           & $111.0 \pm 32.6$ & $.022 \pm .010$  & $.0648 \pm .076$   & $0.73 \pm 0.33$ & $3.45 \pm 5.17$ \\
		& TS                 & \tb{112.4 $\pm$ 14.4} & \tb{.003 $\pm$ .003} & $.0107 \pm .011$   & \tb{0.16 $\pm$ 0.18} & $0.53 \pm 0.95$ \\
		& TACO               & \tb{113.7 $\pm$ 19.2} & \tb{.006 $\pm$ .031} & \tb{.0010 $\pm$ .002}  & \tb{0.29 $\pm$ 1.18} & \tb{0.07 $\pm$ 0.04} \\
		& STS                & $116.5 \pm 12.5$ & \tb{.004 $\pm$ .015} & \tb{.0007 $\pm$ .002} & \tb{0.32 $\pm$ 0.49} & \tb{0.05 $\pm$ 0.05} \\
		& ESCADA             & $116.9 \pm 13.1$ & \tb{.006 $\pm$ .017} & \tb{.0005 $\pm$ .002} & \tb{0.34 $\pm$ 0.55} & \tb{0.04 $\pm$ 0.04} \\ \bottomrule
	\end{tabular}
	\vspace{-20pt}
\end{table*}
\begin{figure}[ht] 
  \begin{minipage}[b]{0.5\linewidth}
    \centering
    \captionsetup{width=.9\linewidth}
    \includegraphics[width=.9\linewidth]{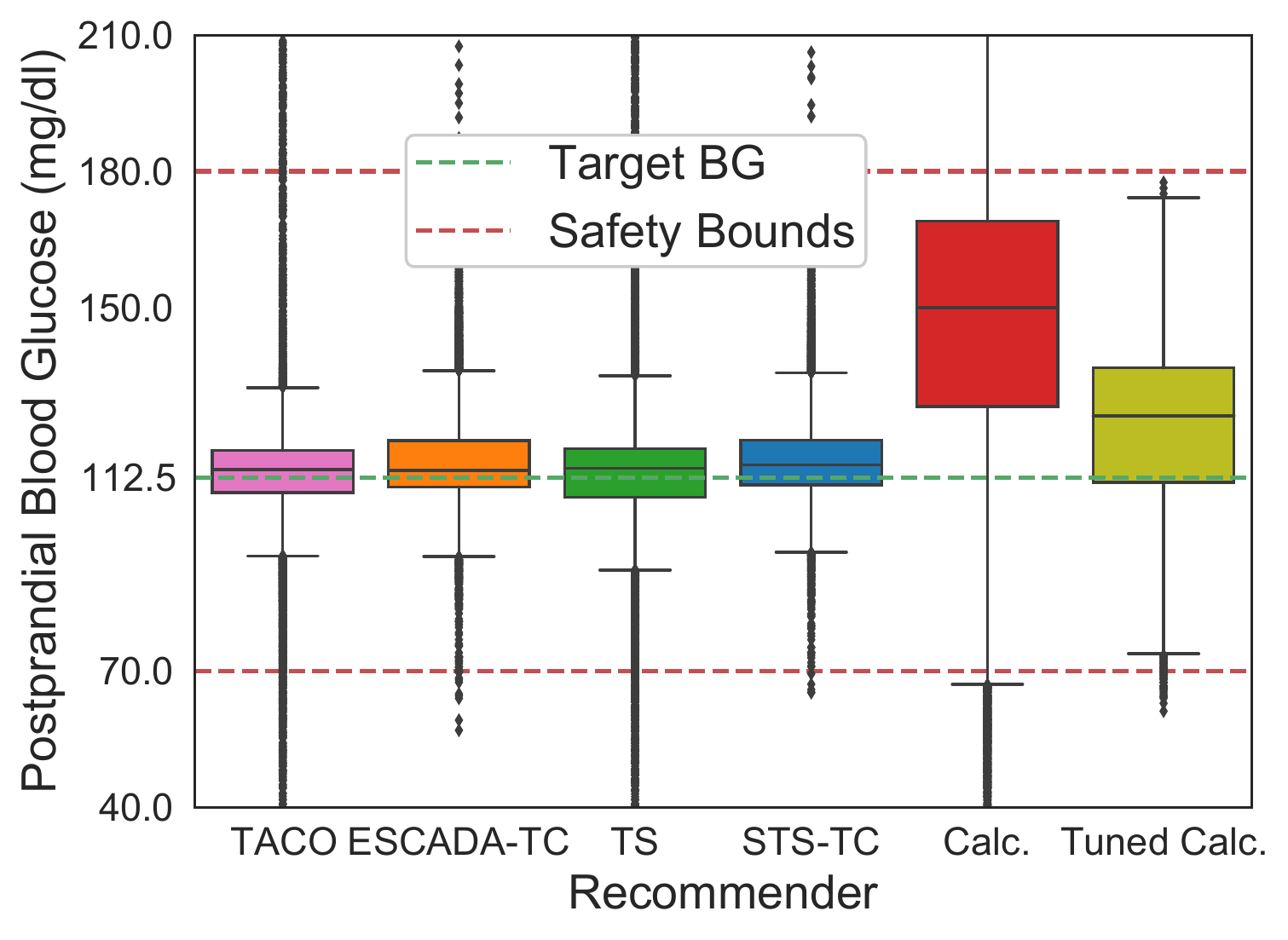} 
    \caption{PPBG distribution boxplots in SME scenario. ``-TC" suffix indicates that the tuned calculator is used.} 
    \label{fig:smes_bigbox}
  \end{minipage}
  \begin{minipage}[b]{0.5\linewidth}
    \centering
    \captionsetup{width=.9\linewidth}
    \includegraphics[width=.915\linewidth]{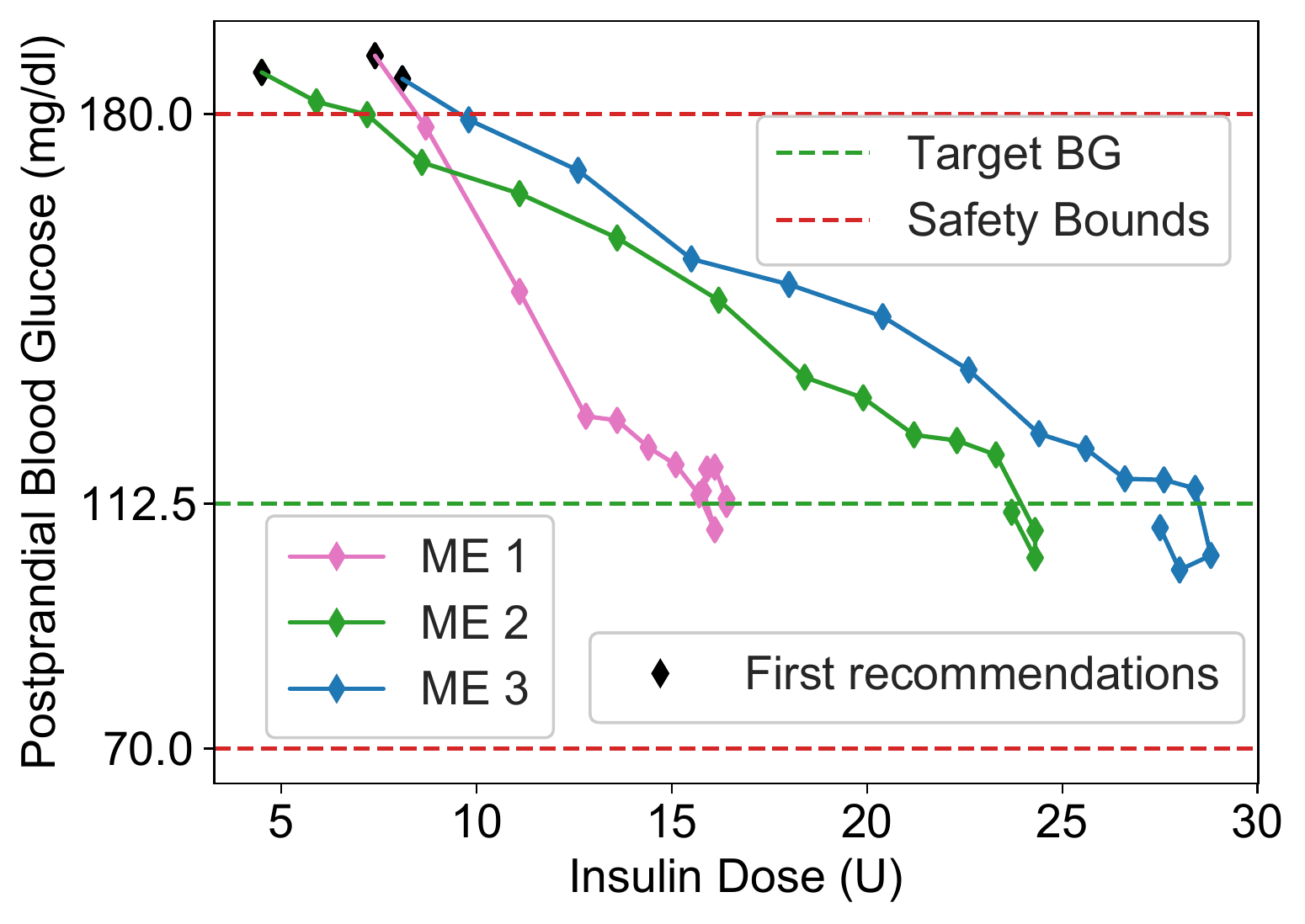} 
    \caption{Consecutive dose recommendations to three different meal events with {\em unsafe} $S_0 (\bm{z})$  in SME scenario.} 
    \label{fig:recovery}
  \end{minipage} 
  \begin{minipage}[b]{0.5\linewidth}
    \centering
    \captionsetup{width=.9\linewidth}
    \includegraphics[width=.9\linewidth]{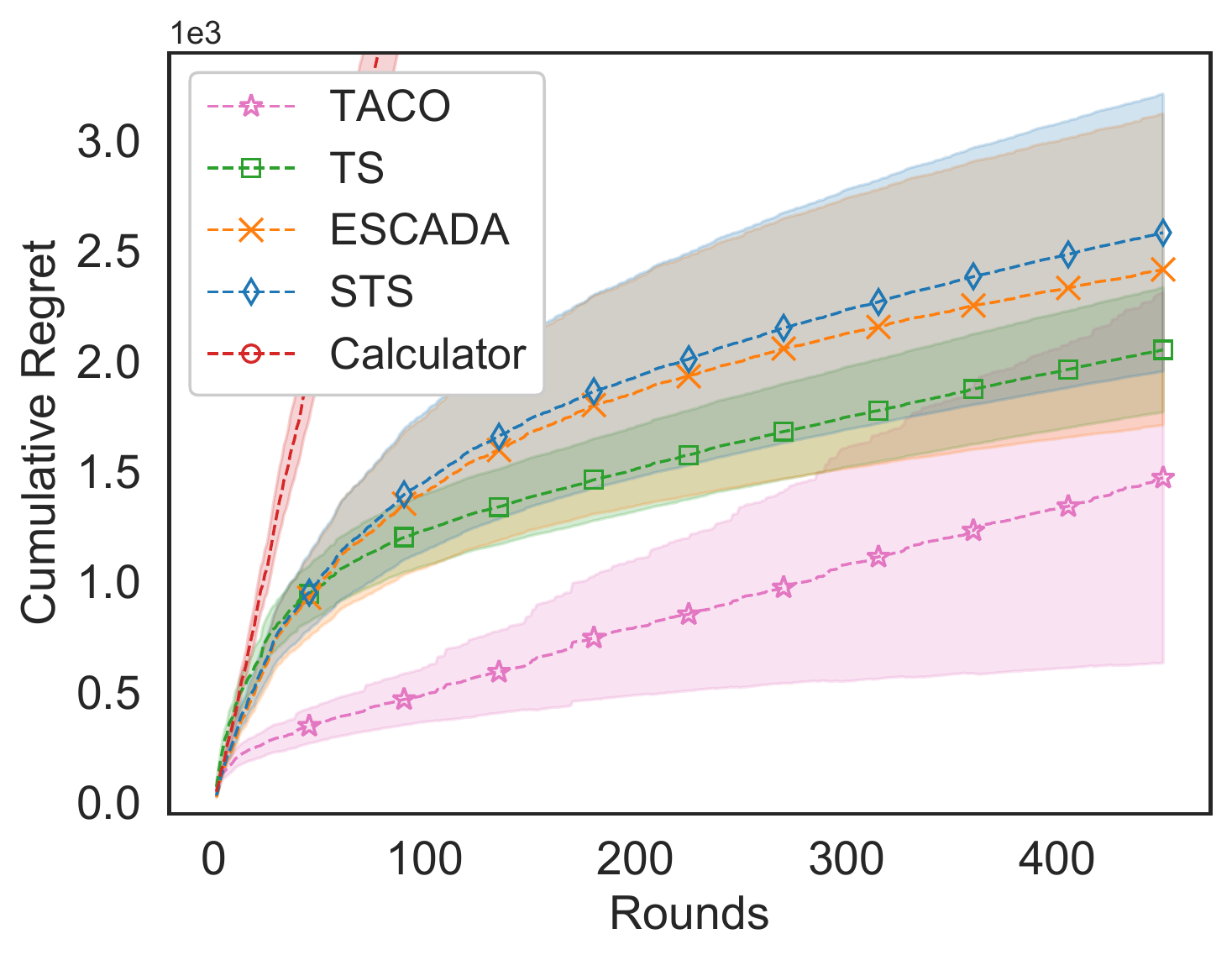} 
    \caption{The cumulative regrets averaged over all 30 patients in MME scenario ($\pm$~0.25 standard deviation).} 
    \label{fig:mme_regret}
  \end{minipage}
  \begin{minipage}[b]{0.5\linewidth}
    \centering
    \captionsetup{width=.9\linewidth}
    \includegraphics[width=.9\linewidth]{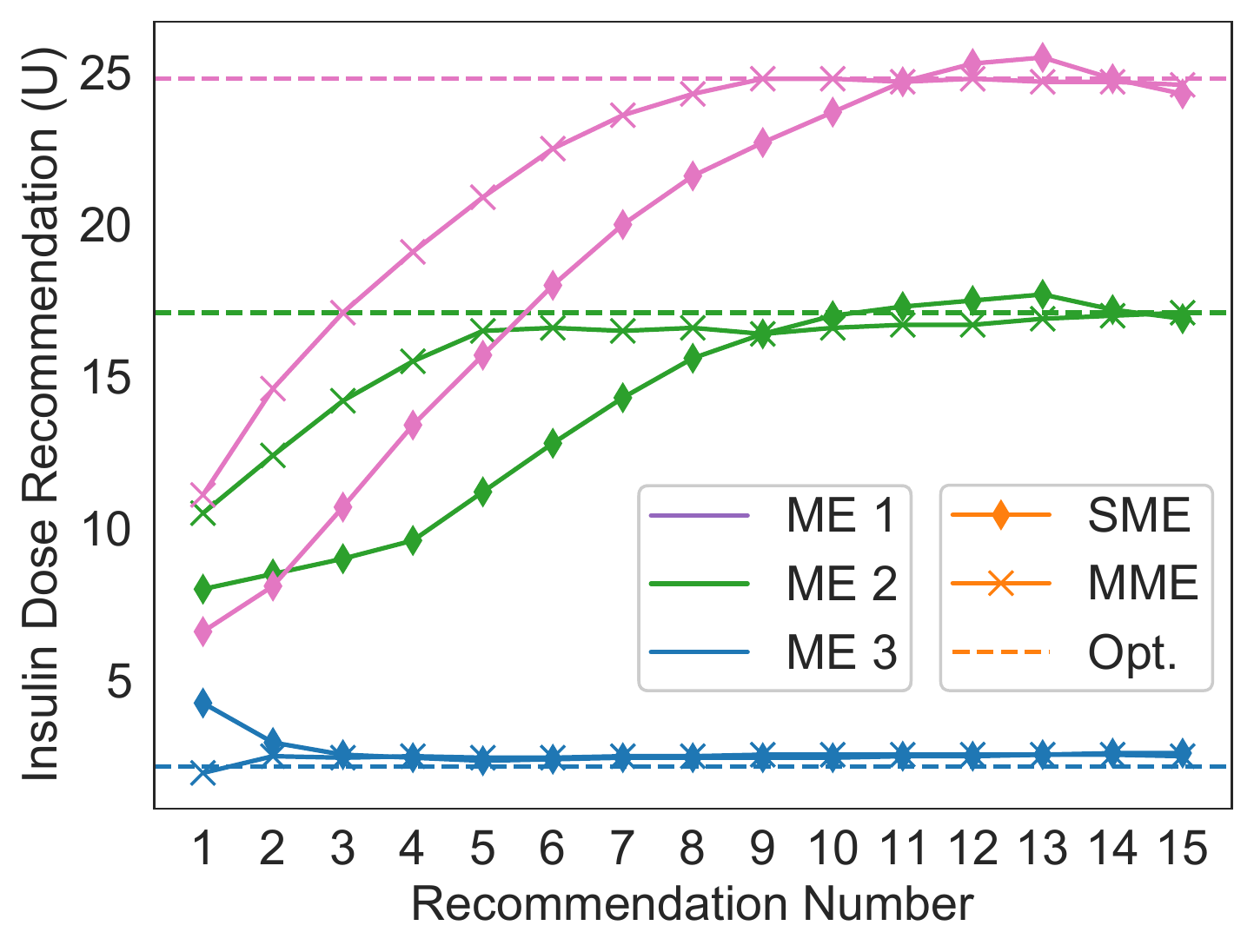} 
    \caption{Consecutive dose recommendations for three different meal events (ME) in SME and MME scenarios.} 
    \label{fig:sme_mme_comparison}
  \end{minipage} 
  \vspace{-25pt}
\end{figure}
\textbf{Regret}\hspace{11pt}\squeeze{Minimizing the regret is equivalent to recommending doses that lead to PPBG values close to the target BG by \eqref{eqn:regret}. We observe from Figures~\ref{fig:smes_bigbox} and \ref{fig:mme_regret}, and Table~\ref{results_table} that ESCADA(-TC) and STS(-TC) significantly outperforms the (tuned) calculator. Figure~\ref{fig:mme_regret} shows that TACO and TS incur lower cumulative regrets than ESCADA and STS in the MME scenario. That is a natural trade-off between safety and regret since the safety mechanism restricts the allocation of a dose before it is identified as safe. Therefore, a safe algorithm yields higher regret when the initial safe set is far from the optimal dose.}

\textbf{Inter-contextual information transfer}\hspace{11pt}We investigate the efficiency of GP-induced smoothness in {\em transferring} information between different contexts. We mark an evident advancement in PPBG distributions and safety metrics in the MME scenario compared to the SME scenario in Table~\ref{results_table}. Examining Figure~\ref{fig:sme_mme_comparison}, we observe that ESCADA expands the safe dose set and identifies the optimal dose faster in the MME scenario. Remember that ESCADA recommends doses for different meal events between two consecutive recommendations for the same meal event in the MME scenario. That is, the information gained from a context improves the performance for other contexts. Besides, we observe significant advances in the safety metrics of TACO and TS in the MME scenario as well.

\textbf{Comparison with GP-UCB}\hspace{11pt}We compare our acquisition functions, TACO and TS, against the adaptations of the GP-UCB algorithm as described in \cref{sec:probform} for three different reward functions in Figure~\ref{fig:rewards}. ``GP-UCB-X" uses ``Reward X" in Figure~\ref{fig:rewards}, which are defined as follows at each round $n$,
\begin{align*}
	r_1(n) = -\log(\abs{y_n - T} + 1)~~~~~~~r_2(n) = 1 - \exp(\abs{y_n - T}/20)~~~~~~~r_3(n) = -\abs{y_n - T}
\end{align*}
\squeeze{We have $y_n$ instead of $f(\bm{z}_n, d_n)$ as the observations are noisy. Figures~\ref{fig:gp_ucb_bp}, \ref{fig:gp_ucb_reg1}, and \ref{fig:gp_ucb_reg2} show that GP-UCB's performance varies wildly for different rewards, and it is outperformed by TACO and TS. The practitioner needs to choose a ``good" reward function for each problem. Our algorithms do not require that.}

\textbf{Consistency}\hspace{11pt}\squeeze{Figures~\ref{fig:smes_bigbox} and \ref{fig:mme_regret}, and Table~\ref{results_table} reveal that ESCADA and STS yield similar results. Both algorithms use GPs and have ${\cal O} (n^3)$ time and ${\cal O} (n^2)$ memory complexities where $n$ is the number of observations. The key difference between them is that STS strikes the balance between exploration and exploitation through intrinsic randomization. That is, for a fixed patient history, STS can make different recommendations for the same meal event in test time, damaging its interpretability and leading to undesired inexplicable variations in the treatment \cite{tomson2013learning}. On the other hand, ESCADA trades-off the exploration and exploitation through the explicit and deterministic machinery described in \cref{sec:escada} which makes it a fairly interpretable model. Moreover, even though we design and test STS, we do not provide an upper bound on its regret as opposed to ESCADA, which is an interesting future work.}

\textbf{Clinician comparison}\hspace{11pt}We compare ESCADA's performance against a clinician's for five virtual patients. For each patient, we provided the clinician with 20 samples in the form of (meal event, insulin dose, PPBG) and asked her to make recommendations for 20 {\em unseen} meal events. We provided ESCADA with the same 20 samples for each patient and queried recommendations for the same 20 test meal events. Figure~\ref{fig:doc_comp} shows that the clinician performs slightly worse than the calculator, and ESCADA outperforms both significantly. These results suggest that making inferences about a patient's dose response is not trivial, and ESCADA is promising supplementary tool in clinical setting. Moreover, ESCADA can provide the clinicians with various useful statistics regarding dose responses, such as the confidence region of the response, hypo-/hyper-glycemia probabilities, or probability of response residing in a specific interval for a given patient, meal event, and insulin dose.
\begin{figure}[ht] 
\vspace{-20pt}
  \begin{minipage}[b]{0.5\linewidth}
    \centering
    \captionsetup{width=.9\linewidth}
    \includegraphics[width=.9\linewidth]{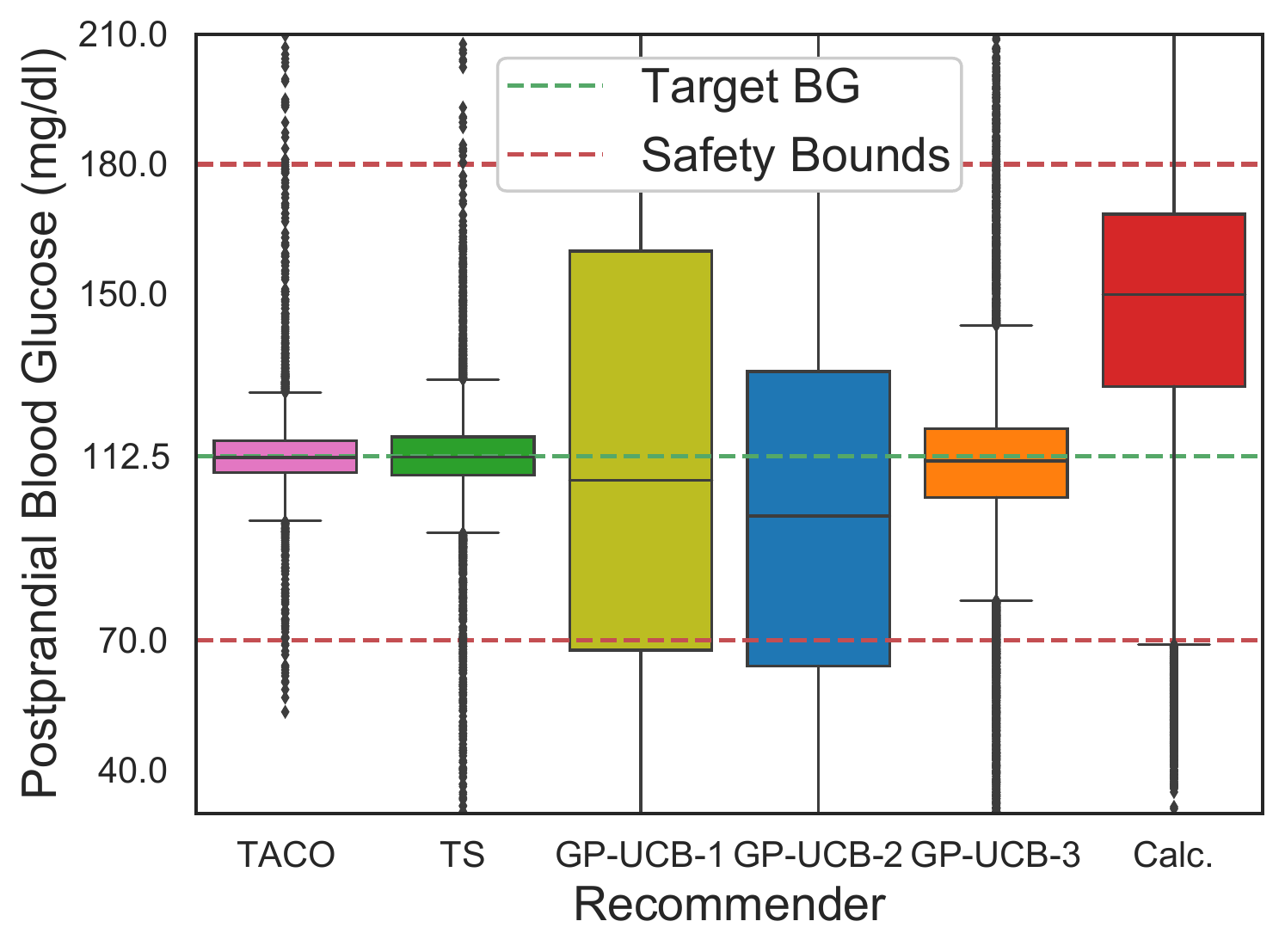} 
    \caption{PPBG distribution boxplots for TACO, TS, GP-UCB, and the calculator in MME scenario.} 
    \label{fig:gp_ucb_bp}
  \end{minipage}
  \begin{minipage}[b]{0.5\linewidth}
    \centering
    \captionsetup{width=.9\linewidth}
    \includegraphics[width=.89\linewidth]{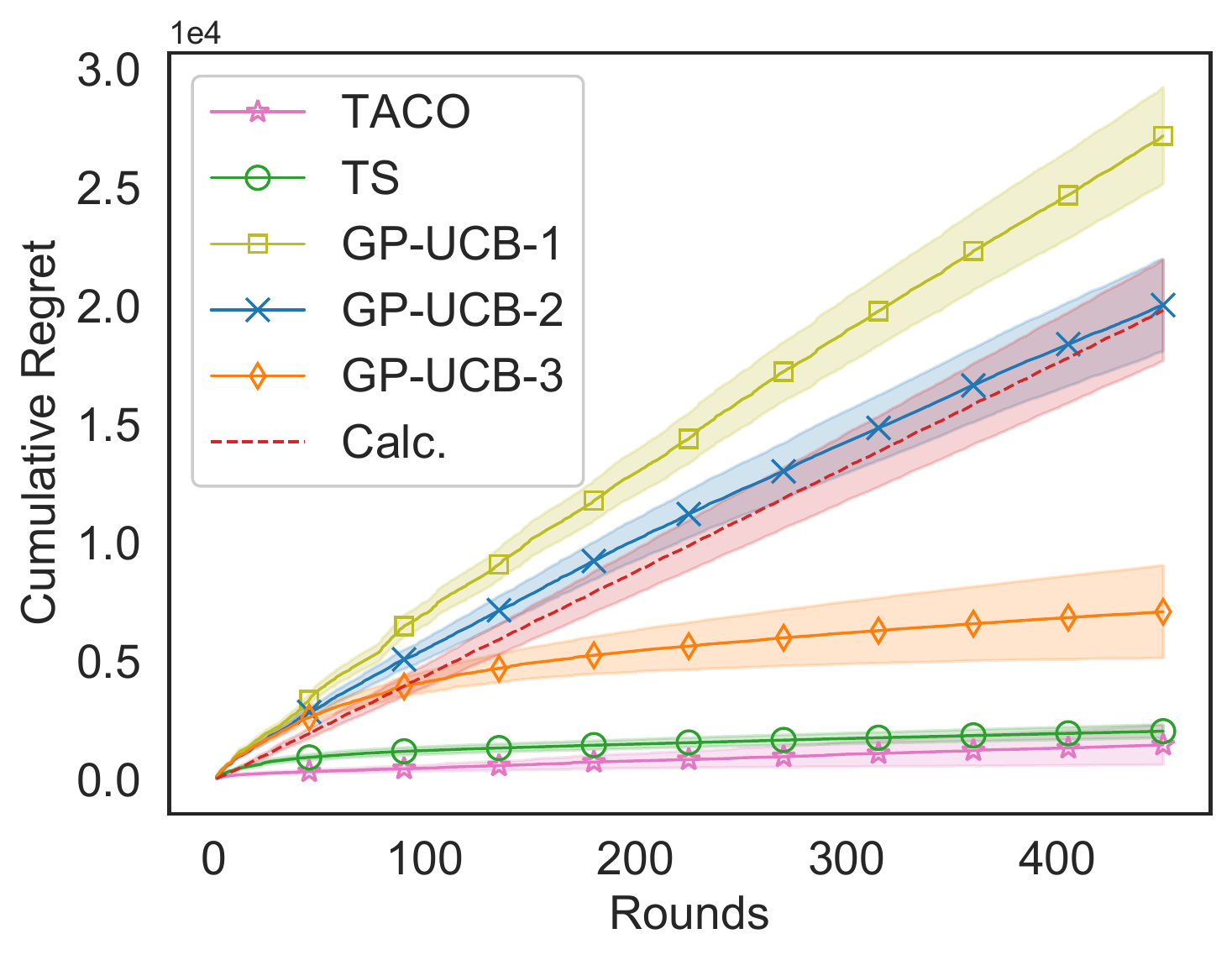} 
    \caption{Cumulative regrets for TACO, TS, GP-UCB, and calculator in MME scenario ($\pm$~0.25 standard deviation)} 
    \label{fig:gp_ucb_reg1}
  \end{minipage} 
  \begin{minipage}[b]{0.5\linewidth}
    \centering
    \captionsetup{width=.9\linewidth}
    \includegraphics[width=.88\linewidth]{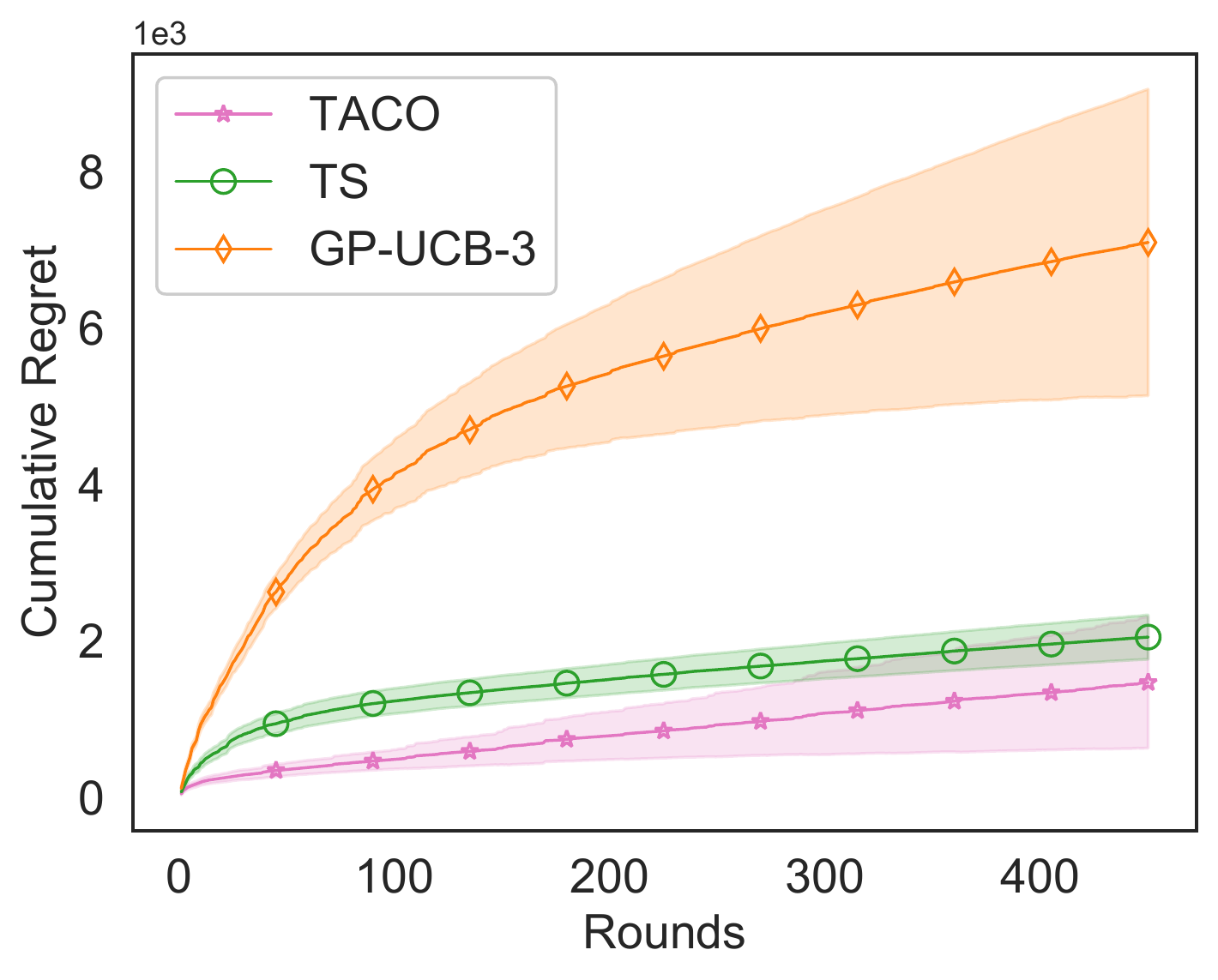} 
    \caption{Cumulative regrets for TACO and the best GP-UCB in MME scenario ($\pm$~0.25 standard deviation).} 
    \label{fig:gp_ucb_reg2}
  \end{minipage}
  \begin{minipage}[b]{0.5\linewidth}
    \centering
    \captionsetup{width=.9\linewidth}
    \includegraphics[width=0.95\linewidth]{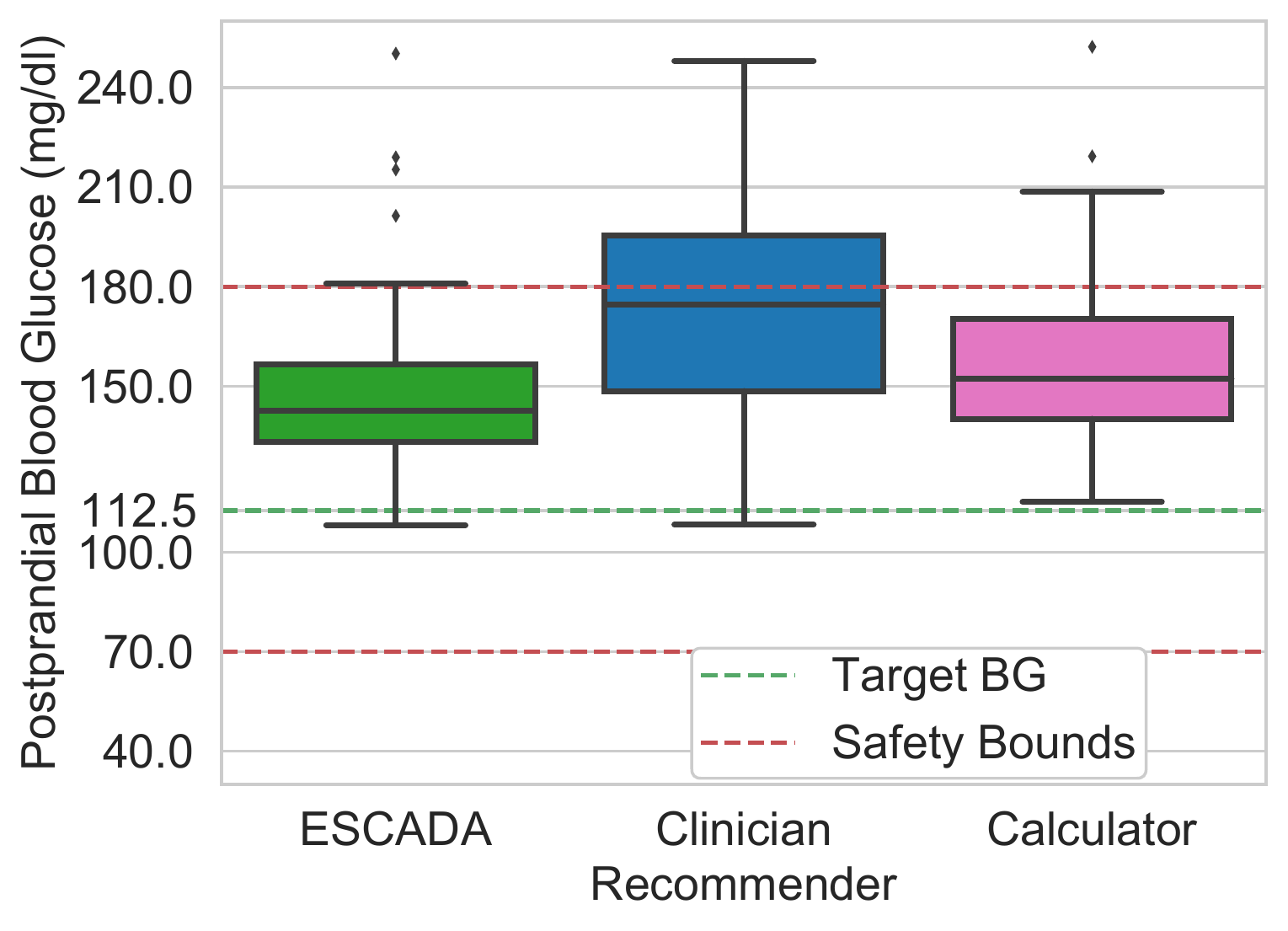} 
    \caption{PPBG distribution boxplots for five virtual patients in the clinician comparison experiment.} 
    \label{fig:doc_comp}
  \end{minipage} 
  \vspace{-30pt}
\end{figure}

\section{Concluding remarks} \label{sec:conclusion}
We formalized and studied a prevalent problem in medicine, {\em safe leveling}, and proposed TACO, a novel acquisition function tailored for this problem structure. As safety is crucial in healthcare, we proposed a safe exploration strategy to render TACO safe. Combining these two blocks, we proposed ESCADA, a {\em safe} and {\em efficient} learning algorithm, and provided safety guarantees and upper bounds on its cumulative regret. Through extensive {\em in silico} experiments on the bolus-insulin dose allocation problem for type-1 diabetes disease, we showed our algorithms' effectiveness over the rule-based dose calculators and straightforward adaptations of the GP-UCB algorithm for the \emph{safe leveling} task. We also compared ESCADA's performance against a clinician's to provide external validation and discussed its potential as a complementary instrument in clinical settings. ESCADA can also be used in other safety-critical decision-making problems where the goal is to safely control a target variable.

\textbf{Acknowledgments: } This work was supported by the Scientific and Technological Research Council of Turkey (TUBITAK) under Grant 215E342. Ilker Demirel was also supported by Vodafone as part of 5G and Beyond Joint Graduate Support Programme coordinated by Information and Communication Technologies Authority. The clinician experiment was done as part of the TUBITAK Project 215E342 under the supervision of a Professor in the Dept. of Internal Diseases in Umraniye Training and Research Hospital, Istanbul, Turkey. Dose recommendations for the virtual patients are provided by a clinical dietician working in the same hospital. Research conducted within the scope of the TUBITAK Project 215E342 has been approved by the ethics committee of the hospital.

\small
\bibliography{references}
\bibliographystyle{abbrvnat}

\normalsize

\section*{Checklist}
\begin{enumerate}	
	\item For all authors...
	\begin{enumerate}
		\item Do the main claims made in the abstract and introduction accurately reflect the paper's contributions and scope?
		\answerYes{}
		\item Did you describe the limitations of your work?
		\answerYes{See the discussions after \eqref{eqn:rbf1}, and the comments on Safe Thompson Sampling (STS) in \textbf{Consistency} part in \cref{sec:experiments}.}
		\item Did you discuss any potential negative societal impacts of your work?
		\answerNA{}
		\item Have you read the ethics review guidelines and ensured that your paper conforms to them?
		\answerYes{}
	\end{enumerate}

	\item If you are including theoretical results...
	\begin{enumerate}
		\item Did you state the full set of assumptions of all theoretical results?
		\answerYes{See \cref{sec:probform}.}
		\item Did you include complete proofs of all theoretical results?
		\answerYes{See \cref{sec:proofs}.}
	\end{enumerate}

	\item If you ran experiments...
	\begin{enumerate}
		\item Did you include the code, data, and instructions needed to reproduce the main experimental results (either in the supplemental material or as a URL)?
		\answerYes{Complete code repository is included in the supplementary material with a README file containing the instructions on reproduction.}
		\item Did you specify all the training details (e.g., data splits, hyperparameters, how they were chosen)?
		\answerYes{See \cref{sec:experiments} and the code repository (includes comments).}
		\item Did you report error bars (e.g., with respect to the random seed after running experiments multiple times)?
		\answerYes{See Table~\ref{results_table} and Figures~\ref{fig:smes_bigbox}, \ref{fig:mme_regret}, \ref{fig:gp_ucb_bp}, \ref{fig:gp_ucb_reg1}, and \ref{fig:gp_ucb_reg2}.}
		\item Did you include the total amount of compute and the type of resources used (e.g., type of GPUs, internal cluster, or cloud provider)?
		\answerYes{See \cref{sec:traindet}.}
	\end{enumerate}

	\item If you are using existing assets (e.g., code, data, models) or curating/releasing new assets...
	\begin{enumerate}
		\item If your work uses existing assets, did you cite the creators?
		\answerYes{See \cref{sec:experiments} and \cref{sec:traindet}. We cite \cite{gpy2014} and \cite{simglucose2018}.}
		\item Did you mention the license of the assets?
		\answerYes{See \cref{sec:traindet} for the licences of the used assets.}
		\item Did you include any new assets either in the supplemental material or as a URL?
		\answerYes{We share our complete code repository for the experiments in the supplementary material, which will be made publicly available after publishing the manuscript.}
		\item Did you discuss whether and how consent was obtained from people whose data you're using/curating?
		\answerNA{}
		\item Did you discuss whether the data you are using/curating contains personally identifiable information or offensive content?
		\answerNA{}
	\end{enumerate}

	\item If you used crowdsourcing or conducted research with human subjects...
	\begin{enumerate}
		\item Did you include the full text of instructions given to participants and screenshots, if applicable?
		\answerNA{}
		\item Did you describe any potential participant risks, with links to Institutional Review Board (IRB) approvals, if applicable?
		\answerNA{}
		\item Did you include the estimated hourly wage paid to participants and the total amount spent on participant compensation?
		\answerNA{}
	\end{enumerate}

\end{enumerate}


\appendix

\section{Related work} \label{sec:rw}
\begin{table}[ht]
	\caption{Comparison of ESCADA to some other safe exploration algorithms for GP optimization.}
	\label{rw_table}
	\centering
	\begin{tabular}{@{}lcccc@{}}
		\toprule
		\textbf{\begin{tabular}[c]{@{}l@{}}Safety\\ mechanism\end{tabular}} & \begin{tabular}[c]{@{}c@{}}Safety during \\ exploration\end{tabular} & Efficiency\textsuperscript{*} & \begin{tabular}[c]{@{}c@{}}Two-sided\\ safety constraint\end{tabular} & \begin{tabular}[c]{@{}c@{}}New acquisition\\ with regret bounds\end{tabular} \\ \midrule
\cite{amani2020regret}                           & \textcolor{darkblue}{\cmark}   & \textcolor{darkblue}{\cmark}                     & \xmark                        & \xmark                                   \\
\cite{gelbart2014bayesian}                           & \xmark                           & \textcolor{darkblue}{\cmark}                     & \xmark                        & \xmark                                   \\
\cite{gelbart2016general}                            & \xmark                           & \textcolor{darkblue}{\cmark}                     & \textcolor{darkblue}{\cmark}                        & \xmark                                    \\
\cite{sui2018stagewise}                              & \textcolor{darkblue}{\cmark}                            & \xmark                & \xmark                        & \xmark                                    \\
\cite{sui2015safe}                                   & \textcolor{darkblue}{\cmark}                            & \xmark				 & \xmark                        & \xmark                                     \\
\cite{turchetta2019safe}                             & \textcolor{darkblue}{\cmark}                            & \textcolor{darkblue}{\cmark}                 & \xmark                        & \xmark                              \\
\textbf{ESCADA}~(Ours)                                          & \textcolor{darkblue}{\cmark}                            & \textcolor{darkblue}{\cmark}                 & \textcolor{darkblue}{\cmark}                         & \textcolor{darkblue}{\cmark}                                                                                   \\ \bottomrule
\addlinespace[5pt]
\multicolumn{5}{l}{\textnormal{\begin{tabular}[c]{@{}l@{}}\textsuperscript{*}The algorithm does not designate the safe set exploration as a proxy objective.\\ ~~That is, it does not make queries whose sole purpose is to expand the safe set.\end{tabular}}}
	\end{tabular}
\end{table}
Table~\ref{rw_table} highlights the important aspects of the discussion in Section~\ref{sec:introduction} on related works and their comparison against ESCADA. ESCADA tackles a novel problem structure defined in Section~\ref{sec:probform} by employing a novel acquisition strategy, TACO, together with an efficient safe exploration scheme for a two-sided safety constraint. 

\section{Additional Experimental Results} \label{sec:suppexp}
Figure~\ref{fig:supfig11} and Table~\ref{supp_table_1} present the results for the single meal event (SME) scenario, separately for three different subgroups which are ``Adult", ``Adolescent", and ``Child". The target blood glucose (BG) level is set to $T = 112.5$~mg/dl for all the subgroups for this experiment. While the resulting PPBG distributions are similar among the subgroups, the safety constraints are better satisfied for adults and adolescents than for children.

In Figure~\ref{fig:supfig12}, we present the bolus-insulin dose recommendation distributions in the SME scenario, separately for the ``Adult", ``Adolescent", and ``Child" subgroups. While the safety bounds and the target blood glucose level are the same for all the subgroups, we observe significantly different dose recommendation distributions. Such an observation is not surprising since each individual needs a different treatment strategy to achieve the same objective. Together with Figure~\ref{fig:supfig11}, Figure~\ref{fig:supfig12} demonstrates our algorithms' ability to provide personalized treatment strategies.

\begin{figure}[ht]
     \centering
     \begin{subfigure}[b]{\textwidth}
         \centering
         \includegraphics[width=\textwidth]{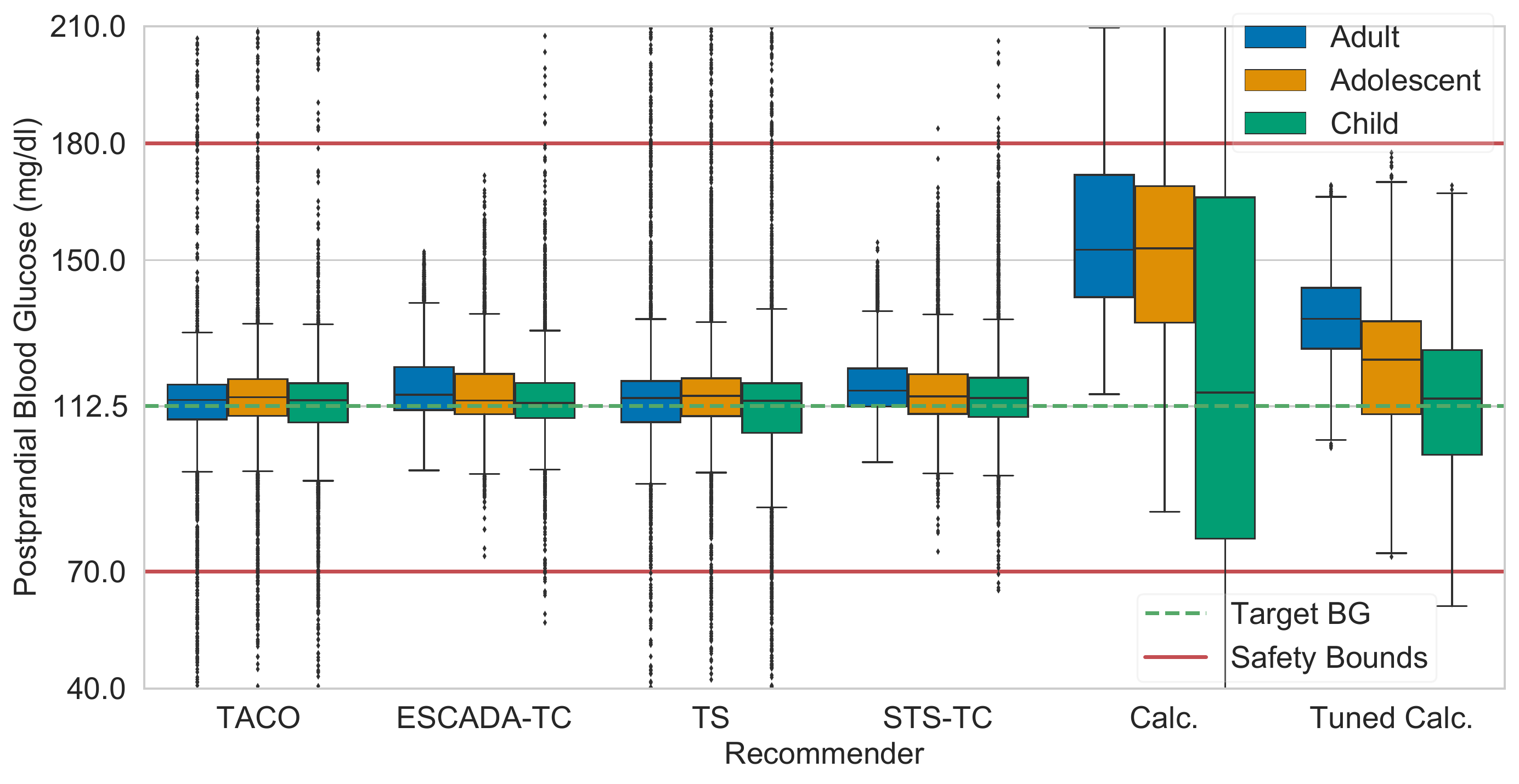}
         \caption{PPBG distributions.}
         \label{fig:supfig11}
     \end{subfigure}
     ~
     \begin{subfigure}[b]{\textwidth}
         \centering
         \includegraphics[width=\textwidth]{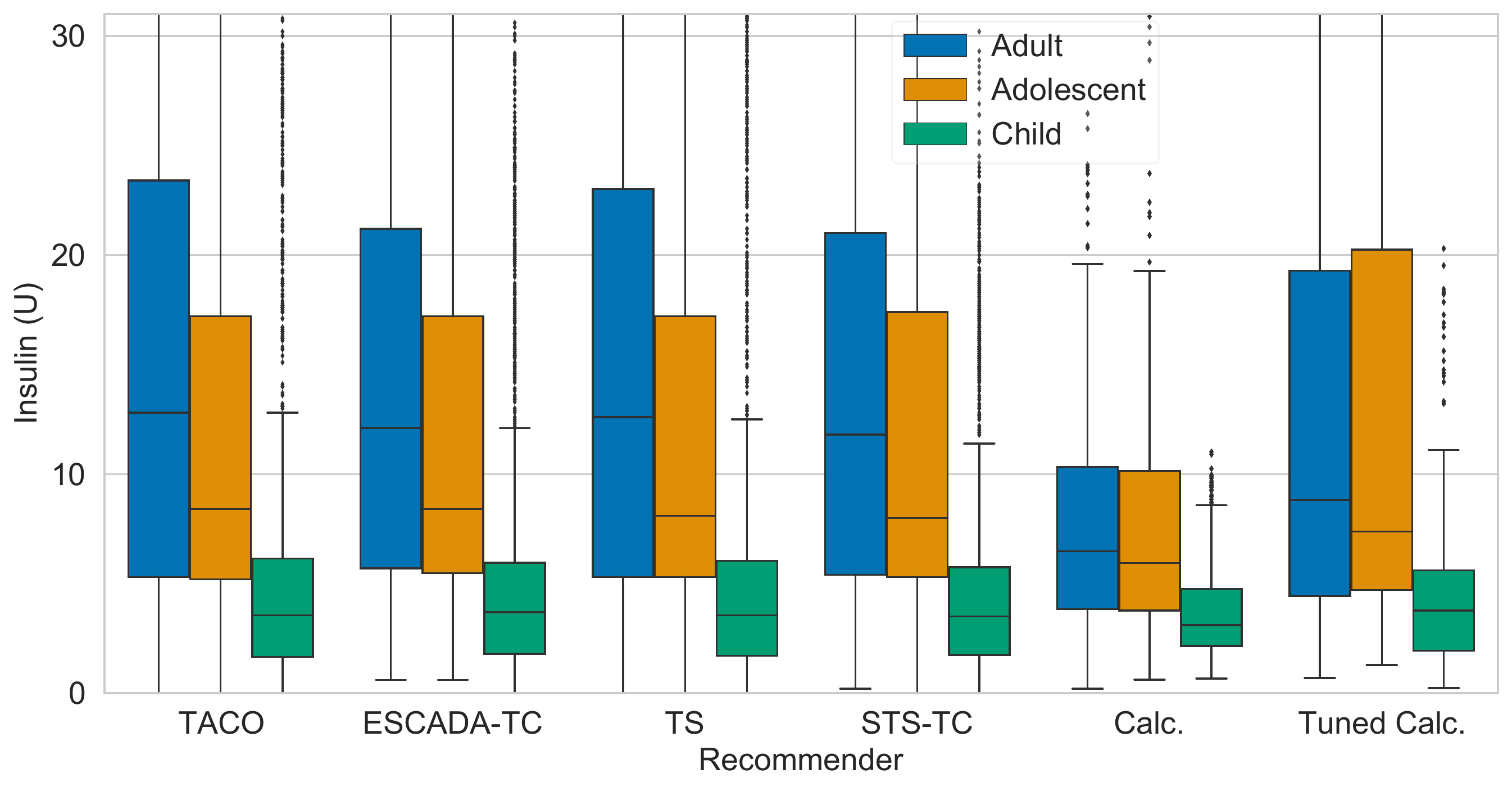}
         \caption{Bolus-insulin dose recommendation distributions.}
         \label{fig:supfig12}
     \end{subfigure}
        \caption{PPBG and bolus-insulin dose recommendation distributions for different subgroups in the SME scenario. Target BG level is the same for all the subgroups and it is $T=112.5$~mg/dl.}
        \label{fig:supfig1}
\end{figure}

In practice, the clinician may want to set different objectives for different patients. Our algorithms are flexible in the sense that the clinician can easily set the safety bounds $T_{\text{min}}$ and $T_{\text{max}}$, and the target level $T$ separately for each patient, without having to tune any further parameters. Note that, however, our objective is not to find some optimal $T_{\text{min}}$, $T_{\text{max}}$, $T$ values, but to learn to provide dose recommendations according to the values determined by the clinician.

To further corroborate our algorithms' ability for providing personalized treatment strategies, we set different target BG levels for every patient subgroup. Note that we can choose different targets for every individual as well. However, for the sake of presentation, we limit the difference to be between the subgroups only. We set $T = 140$~mg/dl for the adults, $T = 110$~mg/dl for the adolescents, and $T = 125$~mg/dl for the children. Inspecting Figure~\ref{fig:supfig21} and Table~\ref{supp_table_2}, we see that our algorithms can adapt to different target BG levels successfully. The difference between dose recommendation distributions in Figures~\ref{fig:supfig12} and \ref{fig:supfig22} also shows that the algorithms can learn different strategies when the target BG level $T$ is changed, for instance, by the clinician in practice. 

\begin{figure}[ht]
     \centering
     \begin{subfigure}[b]{\textwidth}
         \centering
         \includegraphics[width=\textwidth]{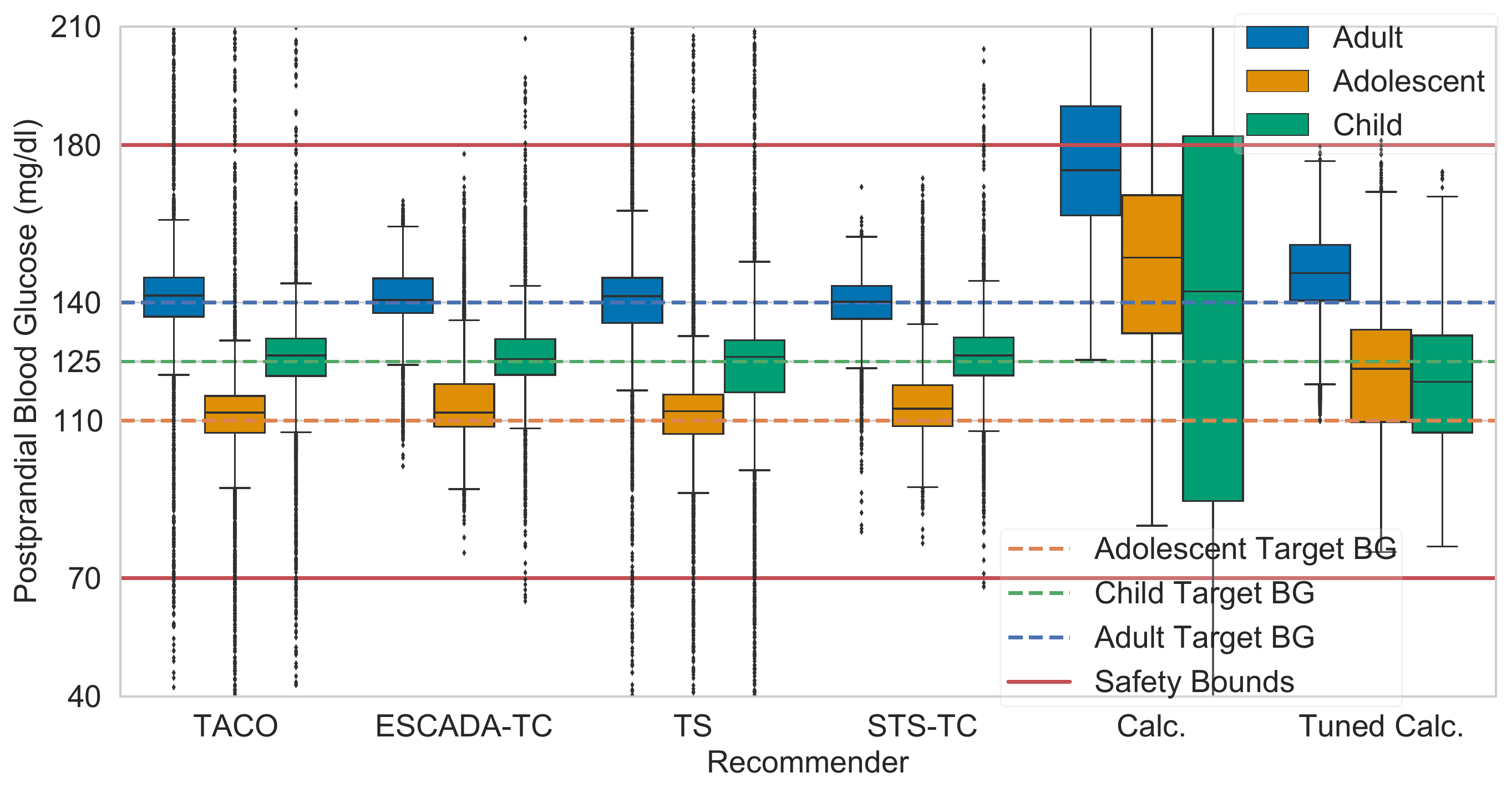}
         \caption{PPBG distributions.}
         \label{fig:supfig21}
     \end{subfigure}
     ~
     \begin{subfigure}[b]{\textwidth}
         \centering
         \includegraphics[width=\textwidth]{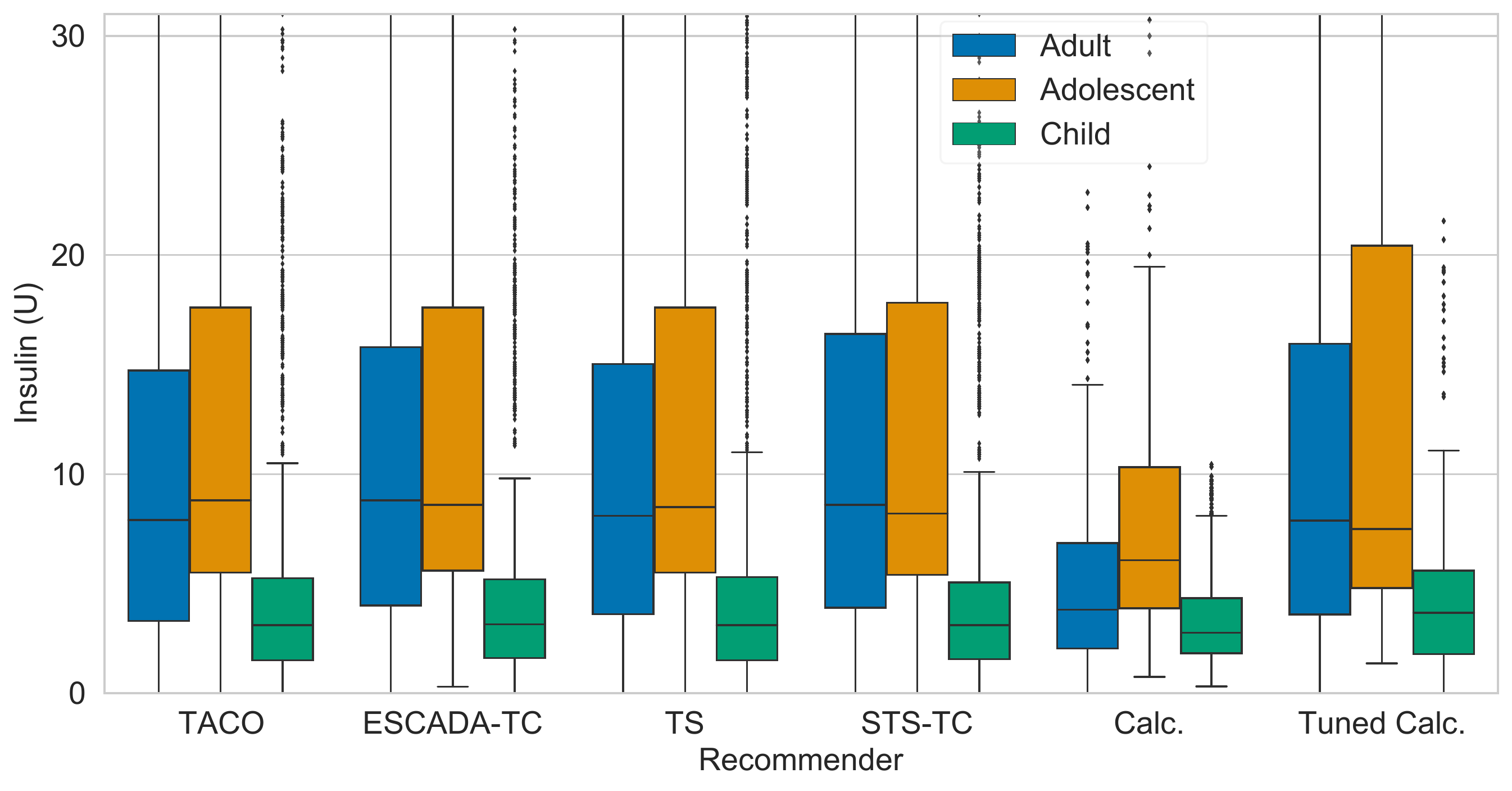}
         \caption{Bolus-insulin dose recommendation distributions.}
         \label{fig:supfig22}
     \end{subfigure}
        \caption{PPBG and bolus-insulin dose recommendation distributions for different subgroups in the SME scenario. Target BG level is $T = 140$~mg/dl for the adults, $T = 110$~mg/dl for the adolescents, and $T = 125$~mg/dl for the children.}
        \label{fig:supfig2}
\end{figure}

\begin{table*}[t]
	\centering
	\caption{SME scenario with the same target BG level for all the subgroups, $T = 112.5$~mg/dl.}
	\label{supp_table_1}
	\begin{tabular}{@{}clccccc@{}}
		\toprule
		\multicolumn{1}{l}{\textbf{}}                  
		& \textbf{Algorithm} & \textbf{PPBG}    & \textbf{Hyper}    & \textbf{Hypo}      & \textbf{HBGI}   & \textbf{LBGI}   \\ \midrule
		\multirow{8}{*}{\STAB{\rotatebox[origin=c]{90}{\textbf{Adult}}}} 
		& Calc.              & $155.9 \pm 19.8$ & $.158 \pm .201$  & \tb{0}  & $4.09 \pm 2.19$ & \tb{0} \\
		& Tuned Calc.        & $134.7 \pm 12.0$ & \tb{0}         & \tb{0}    & $1.36 \pm 0.62$ & \tb{0.002 $\pm$ 0.007} \\
		& TS                 & $117.4 \pm 26.4$ & $.035 \pm .017$  & $.012 \pm .020$  & $0.85 \pm 0.42$ & $0.53 \pm 0.68$ \\
		& TACO               & $117.0 \pm 29.0$ & $.041 \pm .027$  & $.019 \pm .020$  & $0.92 \pm 0.62$ & $0.53 \pm 0.42$ \\
		& STS                & $124.9 \pm 20.6$ & $.022 \pm .036$  & \tb{0}  & $1.07 \pm 0.78$ & \tb{0.05 $\pm$ 0.04} \\
		& ESCADA             & $123.7 \pm 15.4$ & $.007 \pm .012$  & \tb{0}  & $0.72 \pm 0.46$ &\tb{ 0.02 $\pm$ 0.02} \\
		& STS-TC             & \tb{118.2} $\pm$ \tb{8.7} & \tb{0} & \tb{0} & \tb{0.24} $\pm$ \tb{0.07} & \tb{0.01} $\pm$ \tb{0.01} \\
		& ESCADA-TC          & \tb{117.7 $\pm$ 9.3} & \tb{0} & \tb{0} & \tb{0.24 $\pm$ 0.01} & \tb{0.03 $\pm$ 0.01} \\ \midrule
		\multirow{8}{*}{\STAB{\rotatebox[origin=c]{90}{\textbf{Adolescent}}}} 
		& Calc.              & $153.9 \pm 33.8$ & $.132 \pm .265$  & \tb{0}   & $4.51 \pm 4.38$ & $0.08 \pm 0.24$ \\
		& Tuned Calc.        & $122.7 \pm 16.9$ & \tb{0}         & \tb{0}    & $0.73 \pm 0.53$ & $0.17 \pm 0.27$ \\
		& TS                 & $119.6 \pm 33.0$ & $.036 \pm .029$  & $.014 \pm .016$  & $1.16 \pm 1.00$ & $0.49 \pm 0.50$ \\
		& TACO               & $122.1 \pm 45.1$ & $.049 \pm .043$  & $.016 \pm .017$  & $1.78 \pm 1.80$ & $0.50 \pm 0.48$ \\
		& STS                & $125.1 \pm 26.8$ & $.044 \pm .099$  & \tb{0}   & $1.39 \pm 2.02$ & \tb{0.07 $\pm$ 0.05} \\
		& ESCADA             & $124.8 \pm 19.0$ & $.019 \pm .047$  & \tb{0}   & $0.93 \pm 1.22$ & \tb{0.02 $\pm$ 0.03} \\
		& STS-TC             & \tb{116.6} $\pm$ \tb{10.2} & \tb{.0002} $\pm$ \tb{.0007} & \tb{0} & \tb{0.23} $\pm$ \tb{0.28} & \tb{0.05} $\pm$ \tb{0.03} \\
		& ESCADA-TC          & \tb{115.9 $\pm$ 10.5} & \tb{0} & \tb{0} & \tb{0.22 $\pm$ 0.25} & \tb{0.07 $\pm$ 0.03} \\ \midrule
		\multirow{8}{*}{\STAB{\rotatebox[origin=c]{90}{\textbf{Child}}}} 
		& Calc.              & $122.3 \pm 49.3$ & $.140 \pm .177$  & $.184 \pm .291$   & $2.94 \pm 3.10$ & $4.04 \pm 6.00$ \\
		& Tuned Calc.        & $113.8 \pm 18.4$ & \tb{0}         & $.006 \pm .017$    & \tb{0.41 $\pm$ 0.41} & $0.55 \pm 0.65$ \\
		& TS                 & $122.5 \pm 59.6$ & $.067 \pm .026$  & $.038 \pm .035$  & $2.55 \pm 1.38$ & $2.06 \pm 3.70$ \\
		& TACO               & $126.0 \pm 68.5$ & $.059 \pm .014$  & $.018 \pm .021$  & $2.97 \pm 1.50$ & $0.54 \pm 0.49$ \\
		& STS                & $114.9 \pm 25.2$ & $.028 \pm .026$  & $.009 \pm .016$   & $0.75 \pm 0.61$ & $0.34 \pm 0.32$ \\
		& ESCADA             & $117.9 \pm 24.0$ & $.019 \pm .016$  & $.009 \pm .013$   & $0.66 \pm 0.53$ & $0.31 \pm 0.33$ \\
		& STS-TC             & \tb{116.5} $\pm$ \tb{15.5} & \tb{.006} $\pm$ \tb{.007} & \tb{.001} $\pm$ \tb{.002} & \tb{0.37} $\pm$ \tb{0.28} & \tb{0.10} $\pm$ \tb{0.06} \\
		& ESCADA-TC          & \tb{114.7 $\pm$ 16.4} & \tb{.006 $\pm$ .006} & \tb{.002 $\pm$ .005} & \tb{0.31 $\pm$ 0.23} & \tb{0.14 $\pm$ 0.13} \\  \bottomrule
	\end{tabular}
\end{table*}
\begin{table*}[t]
	\centering
	\caption{SME scenario with the target level $T = 140$~mg/dl for the adults, $T = 110$~mg/dl for the adolescents, and $T = 125$~mg/dl for the children.}
	\label{supp_table_2}
	\begin{tabular}{@{}clccccc@{}}
		\toprule
		\multicolumn{1}{l}{\textbf{}}                  
		& \textbf{Algorithm} & \textbf{PPBG}    & \textbf{Hyper}    & \textbf{Hypo}      & \textbf{HBGI}   & \textbf{LBGI}   \\ \midrule
		\multirow{8}{*}{\STAB{\rotatebox[origin=c]{90}{\textbf{Adult}}}} 
		& Calc.              & $176.7 \pm 19.8$ & $.380 \pm .304$  & \tb{0}   & $7.36 \pm 2.55$ & \tb{0} \\
		& Tuned Calc.        & $146.3 \pm 12.4$ & \tb{0}           & \tb{0}   & \tb{0.83 $\pm$ 0.66} & $0.24 \pm 0.47$ \\
		& TS                 & $143.6 \pm 29.1$ & $.066 \pm .056$  & $.015 \pm .022$  & $2.60 \pm 1.00$ & \tb{0} \\
		& TACO               & $145.6 \pm 32.5$ & $.081 \pm .119$  & $.013 \pm .012$  & $3.18 \pm 1.62$ & $0.53 \pm 0.74$ \\
		& STS                & $145.4 \pm 19.3$ & $.071 \pm .107$  & \tb{0}   & $2.67 \pm 1.51$ & \tb{0} \\
		& ESCADA             & $148.2 \pm 16.0$ & $.054 \pm .099$  & \tb{0}   & $2.91 \pm 1.06$ & \tb{0} \\
		& STS-TC             & \tb{139.4} $\pm$ \tb{8.3} & \tb{0} & \tb{0} & \tb{1.70} $\pm$ \tb{0.17} & \tb{0} \\
		& ESCADA-TC          & \tb{140.7 $\pm$ 8.4} & \tb{0} & \tb{0} & \tb{1.84 $\pm$ 0.16} & \tb{0} \\ \midrule
		\multirow{8}{*}{\STAB{\rotatebox[origin=c]{90}{\textbf{Adolescent}}}} 
		& Calc.              & $152.2 \pm 33.8$ & $.120 \pm .261$  & \tb{0}   & $4.30 \pm 4.31$ & \tb{0.10 $\pm$ 0.29} \\
		& Tuned Calc.        & $121.9 \pm 16.9$ & \tb{.0002 $\pm$ .0007}         & \tb{0}    & $0.69 \pm 0.55$ & \tb{0.18 $\pm$ 0.29} \\
		& TS                 & $116.6 \pm 34.9$ & $.036 \pm .026$  & $.020 \pm .023$  & $1.15 \pm 0.91$ & $0.67 \pm 0.62$ \\
		& TACO               & $119.3 \pm 46.6$ & $.051 \pm .046$  & $.023 \pm .019$  & $1.80 \pm 1.91$ & $0.66 \pm 0.57$ \\
		& STS                & $123.9 \pm 27.0$ & $.043 \pm .098$  & \tb{0}   & $1.36 \pm 1.95$ & \tb{0.11 $\pm$ 0.07} \\
		& ESCADA             & $123.0 \pm 19.4$ & $.018 \pm .047$  & \tb{0}   & $0.87 \pm 1.21$ & \tb{0.03 $\pm$ 0.04} \\
		& STS-TC             & \tb{115.1} $\pm$ \tb{10.7} & \tb{0} & \tb{0} & \tb{0.22} $\pm$ \tb{0.34} & \tb{0.06} $\pm$ \tb{0.03} \\
		& ESCADA-TC          & \tb{114.4 $\pm$ 11.0} & \tb{0} & \tb{0} & \tb{0.21 $\pm$ 0.29} & \tb{0.09 $\pm$ 0.03} \\ \midrule
		\multirow{8}{*}{\STAB{\rotatebox[origin=c]{90}{\textbf{Child}}}} 
		& Calc.              & $135.6 \pm 52.2$ & $.272 \pm .291$  & $.101 \pm .176$   & $4.37 \pm 4.44$ & $2.26 \pm 3.40$ \\
		& Tuned Calc.        & $119.8 \pm 17.0$ & \tb{0}       & \tb{0}   & \tb{0.60 $\pm$ 0.38} & $0.20 \pm 0.20$ \\
		& TS                 & $133.0 \pm 55.5$ & $.067 \pm .036$  & $.025 \pm .015$  & $2.83 \pm 1.52$ & $0.72 \pm 0.43$ \\
		& TACO               & $134.1 \pm 56.1$ & $.047 \pm .026$  & $.010 \pm .012$  & $2.61 \pm 1.70$ & $0.28 \pm 0.28$ \\
		& STS                & $123.8 \pm 18.0$ & $.023 \pm .032$  & $.005 \pm .009$   & \tb{0.76 $\pm$ 0.59} & $0.13 \pm 0.16$ \\
		& ESCADA             & $127.4 \pm 15.5$ & $.016 \pm .018$  & $.009 \pm .012$   & \tb{0.85 $\pm$ 0.32} & $0.20 \pm 0.24$ \\
		& STS-TC             & \tb{126.5} $\pm$ \tb{12.3} & \tb{.006} $\pm$ \tb{.006} & \tb{.0006} $\pm$ \tb{.0014} & \tb{0.70} $\pm$ \tb{0.12} & \tb{0.04} $\pm$ \tb{0.03} \\
		& ESCADA-TC          & \tb{126.2 $\pm$ 12.7} & \tb{.006 $\pm$ .009} & \tb{.001 $\pm$ .004} & \tb{0.69 $\pm$ 0.17} & \tb{0.04 $\pm$ 0.05} \\  \bottomrule
	\end{tabular}
\end{table*}

\clearpage
\section{Rule-based bolus insulin dose calculators} \label{sec:bolus_calc}
Rule-based bolus insulin dose calculators are commonly used in diabetes care, as they are transparent and interpretable \cite{walsh2011guidelines}. We use them to determine the initial safe dose (i.e., $S_0 (\bm{z})$) for each patient. A rule based-calculator recommends a bolus insulin dose via a simple equation. We use the one from \cite{schmidt2014bolus}, which is given below.
\begin{align} \label{eqn:bolus_dose}
\text{Bolus-insulin Dose} = \Big(\frac{\text{CHO}}{\text{ICR}} + \frac{\text{G}_M - \text{G}_T}{\text{CF}}\Big)^+
\end{align}
where $a^{+} \coloneqq \max \{0, a\}$, CHO~(g) is the carbohydrate intake, ICR (g/U) is the insulin-to-carbohydrate ratio, and CF~(mg/dl/U) is the insulin correction factor. $\text{G}_M$ and $\text{G}_T$~(mg/dl) denote the preprandial BG level and the postprandial target BG levels, respectively. In order to prevent hypoglycemia and hyperglycemia events, ICR and CF values need to be precisely tuned for each patient, which may prove to be challenging in reality. Even if ICR and CF values are tuned correctly, rule-based calculators (e.g., \eqref{eqn:bolus_dose}) discount other patient characteristics which may affect PPBG. Research shows that the correction doses constitute 9\% of the patients' total daily insulin dose intake due to the calculator's failure \cite{walsh2011guidelines}. 

\section{Proofs} \label{sec:proofs}
\subsection{Proof of Theorem \ref{theorem:thm1}}  
We prove by induction when $f$ is $L$-Lipschitz continuous and the event $\cal {E}$ in Lemma~\ref{lemma:lemma1} holds. When $\mathcal{E}$ holds, we have $f(\bm{z}_n, d) \in [\mu_{n-1} (\bm{z}_n, d) - \beta^{1/2}_n \sigma_{n-1} (\bm{z}_n, d), \mu_{n-1} (\bm{z}_n, d) + \beta^{1/2}_n \sigma_{n-1} (\bm{z}_n, d)]$, that is,
\begin{align*}
l_n (\bm{z}_n, d) \leq f(\bm{z}_n, d) \leq u_n (\bm{z}_n, d)~,
\end{align*}
for every $d \in \mathcal{D}$ and $\bm{z}_n \in \mathcal{Z}$. Also, by Lipschitz continuity, we have,
\begin{align*}
f(\bm{z}_n, d) &\geq f(\bm{z}_n, d') - Lq_{\bm{z}_n} (d, d') \\
&\geq l_n (\bm{z}_n, d') - Lq_{\bm{z}_n} (d, d')~, 
\end{align*}
and,
\begin{align*}
f(\bm{z}_n, d) &\leq f(\bm{z}_n, d') + Lq_{\bm{z}_n} (d, d') \\
&\leq u_n (\bm{z}_n, d') + Lq_{\bm{z}_n} (d, d')~,
\end{align*}
that is,
\begin{align*}
l_n (\bm{z}_n, d') - Lq_{\bm{z}_n} (d, d') \leq f(\bm{z}_n, d) \leq u_n (\bm{z}_n, d') + Lq_{\bm{z}_n} (d, d')~,
\end{align*}
for any $d, d' \in \mathcal{D}$. 

Remember, we have,
\begin{align}
	\bar{l}_n (\bm{z}, d) &= \max{ \{l_n(\bm{z}, d), l_n(\bm{z}, d') - Lq_{\bm{z}} (d,d')\}} \label{eqn:addedft1} \\
	\bar{u}_n (\bm{z}, d) &= \min{ \{u_n(\bm{z}, d), u_n(\bm{z}, d') + Lq_{\bm{z}} (d,d') \}}~, \label{eqn:addedft2}
\end{align}
where $d' = \argmin_{d^* \in \overline{\mathcal{D}}_z} q_{\bm{z}} (d, d^*)$. Then, we have, $f(\bm{z}_n, d) \geq \bar{l}_n (\bm{z}_n, d)$ and  $f(\bm{z}_n, d) \leq \bar{u}_n (\bm{z}_n, d)$. For the base case, $n=0$, we have $f(\bm{z}, d) \in [T_{\min}, T_{\max}]$ for any $d \in S_0(\bm{z})$ by definition. For the inductive step, first consider the safety condition from below, that is, $f(\bm{z}_n, d_n) \geq T_{\min}$. For $n \geq 1$, assume that $f(\bm{z}_n, d) \geq T_{\min}$ for any $d \in S_{n-1}(\bm{z}_n)$. Then by definition of the safe set expansion operator, for all $d^*~\in~S_n(\bm{z}_n)~\setminus~S_{n-1}(\bm{z}_n)$, there exists $d' \in S_{n-1}(\bm{z}_n)$ such that,
\begin{align}
T_{\min} &\leq \bar{l}_{n} (\bm{z}_n, d') - Lq_{\bm{z}_n}(d', d^*) \nonumber \\
&\leq f(\bm{z}_n, d') - Lq_{\bm{z}_n}(d', d^*) \nonumber \\
&\leq f(\bm{z}_n, d^*)~.\label{eqn:lip1}
\end{align}
Similarly, to check the condition from above, we can observe,
\begin{align}
T_{\max} &\geq \bar{u}_{n} (\bm{z}_n, d') + Lq_{\bm{z}_n}(d', d^*) \nonumber \\
&\geq f(\bm{z}_n, d') + Lq_{\bm{z}_n}(d', d^*)  \nonumber \\
&\geq f(\bm{z}_n, d^*)~, \label{eqn:lip2}
\end{align}
where \eqref{eqn:lip1} and \eqref{eqn:lip2} follow from Lipschitz continuity, and we are done.

\subsection{Proof of Theorem \ref{theorem:thm2}} 
Assume ${\cal E}$ in Lemma \ref{lemma:lemma1} holds. Then,
\begin{align}
R_N &= \sum_{n=1}^{N} \mid f(\bm{z}_n, d_n) - T \mid \nonumber \\
&\leq \sum_{n=1}^{N} \big(\mid f(\bm{z}_n, d_n) - \mu_{n-1}(\bm{z}_n, d_n) \mid + \mid (\mu_{n-1} (\bm{z}_n, d_n) - T) \mid\big) \label{eqn:tri_ineq} \\
&\leq \sum_{n=1}^{N} 2 \beta_n^{1/2} \sigma_{n-1} (\bm{z}_n, d_n)~, \label{eqn:ucb_ineq} 
\end{align}
where \eqref{eqn:tri_ineq} follows from triangle inequality on $\mathbb{R}$. For the first term in \eqref{eqn:tri_ineq}, we have, 
\begin{align*}
	\abs{f(\bm{z}_n, d_n) - \mu_{n-1} (\bm{z}_n,d_n)} \leq \beta_n^{1/2} \sigma_{n-1} (\bm{z}_n,d_n)~,
\end{align*}
by Lemma~\ref{lemma:lemma1} under the good event ${\cal E}$. For the second term, notice that in the first step of TACO, we choose the doses whose confidence intervals contain the target value $T$, i.e., $\bar{l}_n (\bm{z}_n, d_n) \leq T \leq \bar{u}_n (\bm{z}_n, d_n)$. By \eqref{eqn:addedft1} and \eqref{eqn:addedft2}, we have,
\begin{align*}
	l_n(\bm{z}_n,d_n) \leq T \leq u_n(\bm{z}_n,d_n)~,
\end{align*}
which is equivalent to,
\begin{align} \label{eqn:addedft5}
	\mu_{n-1} (\bm{z}_n,d_n) - \beta_n^{1/2} \sigma_{n-1} (\bm{z}_n,d_n) \leq T \leq \mu_{n-1} (\bm{z}_n,d_n) + \beta_n^{1/2} \sigma_{n-1} (\bm{z}_n,d_n)~,
\end{align}
by definitions of $l_n(\bm{z}_n,d_n)$ and $u_n(\bm{z}_n,d_n)$ in Section~\ref{sec:escada}. Finally, we have, $\abs{\mu_{n-1} (\bm{z}_n, d_n) - T} \leq \beta_n^{1/2} \sigma_{n-1} (\bm{z}_n,d_n)$  by \eqref{eqn:addedft5}. Note that since there is not any safety constraint, we have at least one dose in $\mathbb{D}_n = \mathcal{D}$ (e.g., $d^*_{\bm{z}_n}$) whose confidence interval contains the target value, $T$, which makes the passage from \eqref{eqn:tri_ineq} to \eqref{eqn:ucb_ineq} always possible. Then, by Cauchy-Schwartz inequality,
\begin{align}
R_N^2 &\leq N \sum_{n=1}^{N} 4 \beta_{n} \sigma^2_{n-1} (\bm{z}_n, d_n) \nonumber \\
&\leq 4 N \beta_{N} \sigma^2 \sum_{n=1}^{N} \sigma^{-2} \sigma^2_{n-1} (\bm{z}_n, d_n) \label{eqn:beta_inc} \\
& \leq 8 N \beta_{N} \sigma^2 \frac{1}{2} \sum_{n=1}^{N} \frac{\sigma^{-2} \log(1 + \sigma^{-2} \sigma^2_{n-1} (\bm{z}_n, d_n)) }{\log(1 + \sigma^{-2})} \label{eqn:logtrick2} \\
& \leq C N \beta_N \max \frac{1}{2} \sum_{n=1}^N \log(1 + \sigma^{-2} \sigma^2_{n-1} (\bm{z}_n, d_n)) \nonumber \\ 
& \leq C N \beta_N \gamma_N^{vol1} \label{eqn:regsquare}~,
\end{align}
where \eqref{eqn:beta_inc} holds since $\beta_n$ is increasing in $n$. \eqref{eqn:logtrick2} follows from the fact that $s^2 < \frac{\sigma^{-2} log(1+s^2)}{log(1 + \sigma^{-2})}$ for $s \in [0, \sigma^{-2}]$, and $\sigma^{-2} \sigma_{n-1}^2 (\bm{z}_n, d_n) \leq \sigma^{-2} k(\bm{x}_n, \bm{x}_n) \leq \sigma^{-2}$, and \eqref{eqn:regsquare} follows from combining Lemma \ref{lemma:lemma2} and the definition of $\gamma_N^{vol1}$ in Section~4. Theorem~\ref{theorem:thm2} follows from taking square roots of both sides. 

\subsection{Auxiliary Results}

Before we prove Theorem~\ref{theorem:thm3}, it is useful to highlight the critical difference between the setup considered in Theorem~\ref{theorem:thm2} and Theorem~\ref{theorem:thm3}, and motivate the proof of Theorem~\ref{theorem:thm3}. In Theorem~\ref{theorem:thm2}, note that the passage between \eqref{eqn:tri_ineq} and \eqref{eqn:ucb_ineq} is always possible. This is because it is always guaranteed that there exists at least one dose $d \in \mathbb{D}_n$, whose confidence interval contains the target, $T$, where $\mathbb{D}_n = \mathcal{D}$ (e.g. $d_{\bm{z}_n}^*$). That allows us to upper-bound the term $\mid (\mu_{n-1} (\bm{z}_n, d_n) - T) \mid$ by $\beta_n^{1/2} \sigma_{n-1} (\bm{z}_n, d_n)$. However, for the setup considered in Theorem~\ref{theorem:thm3}, we are not allowed to choose any $d \in \mathcal{D}$, but choose $d \in S_n(\bm{z}_n)$ in round $n$, to guarantee the safety of the recommendation. Until we are certain that the optimal dose $d_{\bm{z}}^* \in S_n(\bm{z})$, we will incur a linear regret at the worst case. Once we have $d_{\bm{z}}^* \in S_n(\bm{z})$, the regret will converge to the same order as in Theorem~\ref{theorem:thm1}, since the relation between \eqref{eqn:tri_ineq} and \eqref{eqn:ucb_ineq} will be valid.

In the next three lemmas, we show that if $d^*_{\bm{z}} \in {\cal R}^m_{\epsilon} (S_0(\bm{z}))$ for some $m \in \mathbb{N}$ (i.e., $d^*_{\bm{z}}$ is reachable from the initial safe set $S_0 (\bm{z})$ after finitely many iterations), we have $d_{\bm{z}}^* \in S_n(\bm{z})$ after a finite number of rounds, and then the the passage between \eqref{eqn:tri_ineq} and \eqref{eqn:ucb_ineq} is also possible for the safety constrained setting in a single context scenario.
\begin{lemma}\label{lemma_helpful_lemma}
Posterior variance of the Gaussian process after round $n$ at $\bm{x} \in \mathcal{X}$ can be upper bounded by the noise variance and the number of times the function evaluated at $\bm{x}$ up to round $n$ ($n_{\bm{x}}$) as, $\sigma^2/n_{\bm{x}} \geq \sigma_{n-1}^2 (\bm{x})$. 
\end{lemma}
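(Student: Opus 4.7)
The plan is to reduce the problem to a simpler situation involving only the $n_{\bm{x}}$ co-located observations at $\bm{x}$, and then exploit the monotonicity of the GP posterior variance. In particular, it will suffice to upper-bound $\sigma_{n-1}^2(\bm{x})$ by the variance one would obtain from the $n_{\bm{x}}$ evaluations at $\bm{x}$ alone, ignoring every other observation.

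First I would invoke the ``information never hurts'' property for Gaussian conditioning: if a data set $A$ is extended to a superset $A'$, then the GP posterior variance at $\bm{x}$ can only shrink, i.e., $\sigma_{|A'|}^2(\bm{x}) \leq \sigma_{|A|}^2(\bm{x})$. This is a standard consequence of the law of total variance for jointly Gaussian vectors. Applied with $A$ the subset of the observations taken exactly at $\bm{x}$ and $A'$ all the observations made in the first $n-1$ rounds, it yields $\sigma_{n-1}^2(\bm{x}) \leq \tilde{\sigma}^2(\bm{x})$, where $\tilde{\sigma}^2(\bm{x})$ is the posterior variance conditioned only on the $n_{\bm{x}}$ evaluations at $\bm{x}$.

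Next I would compute $\tilde{\sigma}^2(\bm{x})$ in closed form. Because all $n_{\bm{x}}$ observations are taken at the same location $\bm{x}$ with i.i.d.\ Gaussian noise of variance $\sigma^2$, by sufficiency of the sample mean they are equivalent to a single observation at $\bm{x}$ with effective noise variance $\sigma^2/n_{\bm{x}}$ (one may alternatively apply the Sherman--Morrison identity to $\sigma^2 \bm{I} + k(\bm{x},\bm{x})\bm{1}\bm{1}^T$). Substituting into the posterior variance formula from \cref{sec:probform} gives
\begin{align*}
    \tilde{\sigma}^2(\bm{x}) = k(\bm{x},\bm{x}) - \frac{k(\bm{x},\bm{x})^2}{k(\bm{x},\bm{x}) + \sigma^2/n_{\bm{x}}} = \frac{k(\bm{x},\bm{x})\,\sigma^2/n_{\bm{x}}}{k(\bm{x},\bm{x}) + \sigma^2/n_{\bm{x}}}.
\end{align*}

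Dividing numerator and denominator on the right by $k(\bm{x},\bm{x})$ shows that it equals $(\sigma^2/n_{\bm{x}})/\bigl(1 + (\sigma^2/n_{\bm{x}})/k(\bm{x},\bm{x})\bigr) \leq \sigma^2/n_{\bm{x}}$. Chained with the monotonicity bound this produces $\sigma_{n-1}^2(\bm{x}) \leq \tilde{\sigma}^2(\bm{x}) \leq \sigma^2/n_{\bm{x}}$, which is the claim. The calculation itself is elementary; the only step requiring some care is the monotonicity reduction, since the positive semi-definiteness of the associated Schur complement is the fact that makes dropping the off-$\bm{x}$ observations a legitimate upper-bounding operation.
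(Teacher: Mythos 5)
Your proof is correct and follows essentially the same route as the paper's: the paper's ``conditioning reduces entropy'' step is exactly your ``information never hurts'' monotonicity (for Gaussians the differential entropy is a monotone function of the variance), and the paper's expression $(n_{\bm{x}}/\sigma^2 + \sigma_0^{-2})^{-1}$ is precisely your closed form $\tilde{\sigma}^2(\bm{x})$, after which both arguments drop the prior-precision term to obtain $\sigma^2/n_{\bm{x}}$. No substantive difference.
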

\begin{proof}
Given $A \subseteq B$, $H(\mu|A) \geq H(\mu|B)$, since conditioning on more observations will reduce entropy.
Let $\pmb{Y}^{\bm{x}}_n$ denote the observations made at $\bm{x} \in \mathcal{X}$ up to round $n$, and $\pmb{Y}_n$ denote all the observations up to round $n$. Then, we have $\pmb{Y}^{\bm{x}}_n \subseteq \pmb{Y}_n$, which implies $H(\mu|\pmb{Y}^{\bm{x}}_n) \geq H(\mu|\pmb{Y}_n)$. The entropy of a Gaussian random variable is $H(\mathcal{N}(\mu, \sigma^2)) = \frac{1}{2} \log(2 \pi e \sigma^2)$. We have $H(\mu|\pmb{Y}_n) = \frac{1}{2} \log(2 \pi e \sigma^2_{n-1} (\bm{x}))$, and $H(\mu|\pmb{Y}^{\bm{x}}_n) = \frac{1}{2} \log (2 \pi e (n_{\bm{x}}/\sigma^2 + \sigma_0^{-2})^{-1}))$, where $\sigma_0^{2} = k(\bm{x}, \bm{x})$ is the prior variance of GP at $\bm{x}$. Then we can write,
\begin{align} \label{eqn:nit_ineq2}
\frac{1}{2} \log (\frac{2 \pi e}{n_{\bm{x}}/\sigma^2 + \sigma_0^{-2}}) \geq \frac{1}{2} \log(2 \pi e \sigma_{n-1}^2 (\bm{x}))~.
\end{align}
Since $k(\bm{x}, \bm{x}) \geq 0$, the following is immediate from \eqref{eqn:nit_ineq2},
\begin{align*}
\frac{\sigma^2}{n_{\bm{x}}} \geq \sigma_{n-1}^2 (\bm{x})~.
\end{align*}
\end{proof}
Let us first to recall the definition of a \emph{safe path}.

\textbf{Definition~\ref{definition:defn1}.} (Safe Path) For a fixed context $\bm{z} \in \mathcal{Z}$, we say that there exists a safe path between two doses $d_1,d_2 \in \mathcal{D}$ if the following is satisfied,
\begin{align} \label{eqn:safepatheqn2}
	\eta(d_1,d_2) = \min \left( \min_{d \in [d_1, d_2]} \big(T_{\max} - \epsilon - f(\bm{z}, d)\big), \min_{d \in [d_1, d_2]} \big( f(\bm{z}, d) - T_{\min} - \epsilon \big) \right) > 0~,
\end{align}

\begin{lemma} \label{lemma:newlemma}
For a fixed context $\bm{z} \in \mathcal{Z}$, if there exists at least one dose $d \in S_0(\bm{z})$ such that there exists a safe path between $d$ and $d_{\bm{z}}^*$, and we have $q_{\bm{z}} (d^1, d^2) = K(\abs{d^1 - d^2})$ for some monotonically increasing mapping $K: \mathbb{R}^+ \rightarrow \mathbb{R}^+$ and for all $d^1, d^2 \in \mathcal{D}$, then we have $d_{\bm{z}}^* \in {\cal R}_{\epsilon}^m (S_0 (\bm{z}))$ for some $m \in \mathbb{N}$.
\end{lemma}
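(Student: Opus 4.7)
The plan is to show that starting from a witness $d_0 \in S_0(\bm{z})$ lying on a safe path to $d_{\bm{z}}^*$, iterative applications of $\mathcal{R}_\epsilon$ progress along that path by a fixed positive step $\delta > 0$ per iteration. Since the path lives inside the compact interval $[0, \overline{D}]$, this forces $d_{\bm{z}}^*$ to be absorbed into the safe set after finitely many steps. I would fix $d_0 \in S_0(\bm{z})$ together with its safe path to $d_{\bm{z}}^*$ and assume without loss of generality $d_0 \leq d_{\bm{z}}^*$ (the mirrored case is identical). Setting $\eta := \eta(d_0, d_{\bm{z}}^*) > 0$, Definition~\ref{definition:defn1} guarantees
\[
T_{\min} + \epsilon + \eta \;\leq\; f(\bm{z}, d) \;\leq\; T_{\max} - \epsilon - \eta
\qquad \forall d \in [d_0, d_{\bm{z}}^*].
\]

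Next I would define the step size $\delta := \sup\{r \geq 0 : K(r) \leq \eta/L\}$. Because $K$ is monotonically increasing with $K(0) = q_{\bm{z}}(d, d) = 0$ and $\eta/L > 0$, this supremum is strictly positive. The key one-step expansion claim is: whenever a dose $d' \in [d_0, d_{\bm{z}}^*]$ belongs to the current safe set and $d \in [d_0, d_{\bm{z}}^*]$ satisfies $|d - d'| \leq \delta$, then $d$ is added to the next safe set by $\mathcal{R}_\epsilon$. Indeed, monotonicity of $K$ yields $q_{\bm{z}}(d, d') = K(|d - d'|) \leq \eta/L$, and plugging this into \eqref{eqn:reach} together with the above bounds on $f(\bm{z}, d')$ gives
$f(\bm{z}, d') - L q_{\bm{z}}(d, d') - \epsilon \geq T_{\min}$ and $f(\bm{z}, d') + L q_{\bm{z}}(d, d') + \epsilon \leq T_{\max}$.

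The proof then closes with a routine induction: set $d_n := \min(d_0 + n\delta, d_{\bm{z}}^*)$ and show $d_n \in \mathcal{R}_\epsilon^n(S_0(\bm{z}))$. The base case $n = 0$ is immediate. For the inductive step, $d_n$ lies in $[d_0, d_{\bm{z}}^*]$ (hence inherits the uniform safety margin $\eta$) and $|d_{n+1} - d_n| \leq \delta$, so the expansion claim applied with $d_n$ as witness places $d_{n+1}$ in $\mathcal{R}_\epsilon^{n+1}(S_0(\bm{z}))$. Taking $m := \lceil (d_{\bm{z}}^* - d_0)/\delta \rceil$ yields $d_m = d_{\bm{z}}^*$, which completes the argument.

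The main delicate point is securing $\delta > 0$; this rests on monotonicity of $K$ and the fact that $K(0) = 0$, a structural property of any kernel-induced metric. Once $\delta$ is positive, no further analytical difficulty appears, since each intermediate witness $d_n$ remains on the same safe path and therefore enjoys the same uniform margin $\eta$, so the one-step expansion radius is valid at every iteration. The lemma essentially formalizes the intuitive ``walk along the safe path'' picture within the $\mathcal{R}_\epsilon$ framework.
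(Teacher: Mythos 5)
Your proof is correct and follows essentially the same route as the paper's: both walk from the initial safe dose toward $d_{\bm{z}}^*$ in uniform steps whose length is controlled by $\eta/L$ through the monotone map $K$, using the safe-path margin to certify each one-step expansion under ${\cal R}_{\epsilon}$. The only cosmetic difference is that you take the step size as $\sup\{r \geq 0 : K(r)\le \eta/L\}$ and cap the iterates at $d_{\bm{z}}^*$, whereas the paper writes the step as $K^{-1}(\eta/L)$ and stops at the first iterate that reaches or overshoots $d_{\bm{z}}^*$; both versions rest on the same implicit regularity of $K$ near zero.
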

\begin{proof}
We first note that ${\cal R}_{\epsilon}^l (A) \subseteq {\cal R}_{\epsilon}^l (B)$ for any $l \in \mathbb{N}$ if $A \subseteq B$ by definition of the reachability operator in \eqref{eqn:reach}. That is, ${\cal R}_{\epsilon}^k (\{d\}) \subseteq {\cal R}_{\epsilon}^k (S_0 (\bm{z}))$ since $d \in S_0 (\bm{z})$. At the heart of the proof lies the idea that if there is a safe path between $d$ and $d_{\bm{z}}^*$, the reachability operator will keep expanding towards $d_{\bm{z}}^*$. Now, if $d = d_{\bm{z}}^*$, we are already done. Consider the case where $d_{\bm{z}}^* > d$. Let $d_1 > d$ be such that,
\begin{align} \label{eqn:etaeqn}
L q_{\bm{z}}(d, d_1) = \eta~, 
\end{align}
where $\eta \coloneqq \eta (d, d_{\bm{z}}^*)$. Then, by \eqref{eqn:safepatheqn2}, we have,
\begin{align} 
&L q_{\bm{z}}(d, d_1) \leq T_{\max} - \epsilon - f(\bm{z},d) \nonumber \\
\Rightarrow& f(\bm{z},d) + L q_{\bm{z}}(d, d_1) + \epsilon \leq T_{\max}~, \label{eqn:lem4n1}
\end{align} 
and,
\begin{align}
&L q_{\bm{z}}(d, d_1) \leq f(\bm{z},d) - T_{\min} - \epsilon \nonumber \\
\Rightarrow& f(\bm{z},d) - L q_{\bm{z}}(d, d_1) - \epsilon \geq T_{\min}~. \label{eqn:lem4n2}
\end{align}
By \eqref{eqn:lem4n1}, \eqref{eqn:lem4n2}, and \eqref{eqn:reach}, we have $d_1 \in {\cal R}_{\epsilon}^1 (\{d\})$, where $d_1 \geq d$ and $q_{\bm{z}} (d, d_1) = \eta/L$. Now, if $q_{\bm{z}} (d, d_1) = K(\abs{d - d_1})$ for some monotonically increasing mapping $K: \mathbb{R}^+ \rightarrow \mathbb{R}^+$, where $K(x) = 0$ $\iff$ $x = 0$ (since $q_{\bm{z}} (\cdot, \cdot)$ is a metric), then we have $\abs{d - d_1} = K^{-1}(\eta/L)$, which implies that,
\begin{align} \label{eqn:iter1}
d_1 = d + K^{-1}(\eta/L)~,
\end{align}
and $d_1 > d$ since $\eta > 0$. Now, if $d_1 \geq d_{\bm{z}}^* > d$, one can simply check that $L q_{\bm{z}}(d, d_{\bm{z}}^*) \leq \eta $ by \eqref{eqn:etaeqn} since $K$ is monotonically increasing. This then implies that $d_{\bm{z}}^* \in {\cal R}_{\epsilon}^1 (\{d\})$, and we are done. If $d_{\bm{z}}^* > d_1$, we will keep expanding the reachable set. Note that we have ${\cal R}_{\epsilon}^1 (\{d_1\}) \subseteq {\cal R}_{\epsilon}^2 (\{d\})$, since $\{ d_1 \} \subseteq {\cal R}_{\epsilon}^1 (\{d\})$, where ${\cal R}_{\epsilon}^2 (\{d\}) = {\cal R}^1_{\epsilon} ({\cal R}^1_{\epsilon} (\{d\}))$. Similar to before, we can show that there exists $d_2 \in \mathcal{D}$ such that $d_2 \in {\cal R}_{\epsilon}^1 (\{d_1\})$ and,
\begin{align}
d_2 &= d_1 + K^{-1}(\eta/L) \nonumber \\
&= d + 2 K^{-1}(\eta/L)~, \label{eqn:iter2}
\end{align}
which follows from \eqref{eqn:iter1}. By this iteration, we have $d_n \in {\cal R}_{\epsilon}^n (\{d\})$, where
\begin{align}
d_n &= d + n K^{-1}(\eta/L)~, \label{eqn:itern}
\end{align}
for $n \in \mathbb{N}$ such that $d_n \leq d_{\bm{z}}^*$. Let $M \coloneqq \abs{d_{\bm{z}}^* - d}$, and $m \in \mathbb{N}$ be the smallest integer such that $mK^{-1}(\eta/L) \geq M$. This implies that $d_{m-1} < d_{\bm{z}}^*$. Then, by \eqref{eqn:itern}, after one more iteration, we have $d_m \geq d_{\bm{z}}^*$, and $d_{\bm{z}}^* \in {\cal R}_{\epsilon}^m (\{d\})$, which completes the proof for the case where $d_{\bm{z}}^* > d$. For $d_{\bm{z}}^* < d$, similar arguments will follow.
\end{proof}
Consider the fixed context scenario where the contexts are $\bm{z} \in \cal {Z}$ for all rounds $n \in \{1, \ldots, N\}$. We define the bad event in round $n$ as
\begin{align} 
{\cal {G}}_n = \{T \not \in [\bar{l}_{n} (\bm{z}, d), \bar{u}_{n} (\bm{z}, d)],~\forall d \in S_n(\bm{z}) \} ~. \label{eqn:event_g}
\end{align}
When the event ${\cal {G}}_n$ occurs in round $n$, we incur the following instantaneous worst-case regret,
\begin{align} \label{eqn:worstreg}
\mid f(\bm{z}, d_n) - T \mid \leq \overline{T}~,
\end{align}
since $f \in [0, \overline{T}]$. When the event $\neg {\cal {G}}_n$ occurs, the instantenous regret can be bounded similar to before when the event $\mathcal{E}$ holds, that is,
\begin{align} \label{eqn:normreg}
\mid f(\bm{z}, d_n) - T \mid &\leq \mid f(\bm{z}, d_n) - \mu_{n-1} (\bm{z}, d_n) \mid + \mid \mu_{n-1} (\bm{z}, d_n) - T \mid \nonumber \\
&\leq 2 \beta^{1/2}_n \sigma_{n-1} (\bm{z}, d_n)~,
\end{align}
The next lemma shows that the number of rounds in which ${\cal {G}}_n$ can occur in a single context scenario is bounded (independent of time horizon $N$) given that the optimal dose $d_{\bm{z}}^*$ is $\epsilon$-reachable from $S_0(\bm{z})$ after a finite number of rounds.
\begin{lemma} \label{lemma:lemma_alt_fin}
Assume that the event $\mathcal{E}$ in Lemma~\ref{lemma:lemma1} holds. Then, if $d^*_{\bm{z}} \in {\cal R}^k_{\epsilon} (S_0(\bm{z}))$ for some $k \in \mathbb{N}$, the event ${\cal{G}}_n$, $n \geq 1$, can only occur finitely many times.
\end{lemma}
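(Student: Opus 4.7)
The plan is to prove, under the event $\mathcal{E}$, that $d^*_{\bm{z}}$ enters $S_n(\bm{z})$ after finitely many rounds; monotonicity of $\{S_n(\bm{z})\}_n$ (implicit in \eqref{eqn:sn_exp}) then keeps it there forever. Since $d^*_{\bm{z}} \in S_n(\bm{z})$ immediately gives $\neg \mathcal{G}_n$ (under $\mathcal{E}$, $T = f(\bm{z}, d^*_{\bm{z}}) \in [\bar{l}_n(\bm{z}, d^*_{\bm{z}}), \bar{u}_n(\bm{z}, d^*_{\bm{z}})]$, so at least one dose in $S_n(\bm{z})$ has $T$ in its interval), the task reduces to exhibiting a finite round $N^*$ with $d^*_{\bm{z}} \in S_{N^*}(\bm{z})$.

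Two facts drive the argument. First, whenever $\mathcal{G}_n$ holds, $\mathcal{C}_n = \emptyset$ (TACO uses $\mathcal{D}_n = S_n(\bm{z})$), so TACO enters its pure-exploration branch and samples a widest-width dose in $S_n(\bm{z}) \cap \overline{\mathcal{D}}_{\bm{z}}$. Second, under $\mathcal{E}$ we have $\bar{l}_n(\bm{z}, d) \geq f(\bm{z}, d) - w_n(\bm{z}, d)$ and $\bar{u}_n(\bm{z}, d) \leq f(\bm{z}, d) + w_n(\bm{z}, d)$, so the expansion rule \eqref{eqn:sn_exp} admits any partner $d$ of a witness $d^{\circ} \in S_n(\bm{z})$ satisfying $f(\bm{z}, d^{\circ}) - L q_{\bm{z}}(d, d^{\circ}) - \epsilon \geq T_{\min}$ and $f(\bm{z}, d^{\circ}) + L q_{\bm{z}}(d, d^{\circ}) + \epsilon \leq T_{\max}$ as soon as $w_n(\bm{z}, d^{\circ}) \leq \epsilon$. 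With these two facts in hand, I induct on the reachability depth $j \in \{0, 1, \dots, k\}$ to show that there is a finite round $N_j$ after which $\mathcal{R}^j_\epsilon(S_0(\bm{z})) \subseteq S_n(\bm{z})$: the base case is $S_0(\bm{z}) \subseteq S_n(\bm{z})$, and in the step any $d \in \mathcal{R}^j_\epsilon \setminus \mathcal{R}^{j-1}_\epsilon$ has a witness $d^{\circ} \in \mathcal{R}^{j-1}_\epsilon \subseteq S_n(\bm{z})$ (for $n \geq N_{j-1}$), which by the second fact pulls $d$ into $S_n(\bm{z})$ as soon as $w_n(\bm{z}, d^{\circ}) \leq \epsilon$.

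The main obstacle is certifying this width-shrinkage in finite time. The plan is to argue by contradiction: suppose $\mathcal{G}_n$ occurred infinitely often. By the first fact, TACO samples a widest-width dose in $S_n(\bm{z}) \cap \overline{\mathcal{D}}_{\bm{z}}$ infinitely often; since $\overline{\mathcal{D}}_{\bm{z}}$ is finite (the $\lambda/(2L)$-grid on a compact $\mathcal{D}$), the pigeonhole principle yields some grid dose $d^\dagger$ that is sampled infinitely often. Lemma~\ref{lemma_helpful_lemma} gives $\sigma_{n-1}^2(\bm{z}, d^\dagger) \leq \sigma^2/n_{d^\dagger} \to 0$, and since $w_n \leq 2\beta_n^{1/2}\sigma_{n-1}$ with $\beta_n$ polylogarithmic in $n$, we obtain $w_n(\bm{z}, d^\dagger) \to 0$. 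The widest-width selection rule then forces every width in $S_n(\bm{z}) \cap \overline{\mathcal{D}}_{\bm{z}}$ below $\epsilon$ for all $n$ large enough; the Lipschitz tightening built into $\bar{l}_n, \bar{u}_n$ (each continuous $d^{\circ}$ is coupled to its nearest grid point at kernel-distance $\leq \lambda/(2L)$) lifts this bound from the grid to arbitrary $d^{\circ} \in \mathcal{D}$, up to the additive slack $\lambda < \epsilon$ chosen in Theorem~\ref{theorem:thm3}. Carrying the induction up to $j = k$ produces a finite $N_k$ with $d^*_{\bm{z}} \in \mathcal{R}^k_\epsilon(S_0(\bm{z})) \subseteq S_n(\bm{z})$ for all $n \geq N_k$; consequently $\mathcal{G}_n$ can fire at most $N_k$ times, which is the desired finiteness.
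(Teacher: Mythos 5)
Your proposal is correct and follows essentially the same route as the paper's proof: both argue that when $\mathcal{G}_n$ fires TACO explores widest-width doses on the finite grid $S_n(\bm{z}) \cap \overline{\mathcal{D}}_{\bm{z}}$, invoke Lemma~\ref{lemma_helpful_lemma} to drive the posterior variances (hence widths, up to the $\lambda < \epsilon$ discretization slack) below $\epsilon$, deduce that the expansion rule \eqref{eqn:sn_exp} then absorbs each successive reachability level $\mathcal{R}^j_\epsilon(S_0(\bm{z}))$ in finite time, and conclude that $d^*_{\bm{z}}$ enters the safe set after finitely many bad rounds. Your pigeonhole step isolating a single infinitely-sampled grid dose and propagating smallness via the widest-width rule is a slightly cleaner way of phrasing the paper's claim that all grid variances vanish, but it is the same argument.
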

\begin{proof}
If we can show that after finite number of rounds that the bad event $\mathcal{G}_n$ occurs, we have $d_z^* \in S_n(\bm{z})$, we are done since it is guaranteed that there exists at least one dose in $S_n (\bm{z})$ ($d_z^*$) whose confidence interval contains the target value. Assume to the contrary that the bad event $\mathcal{G}_n$ occurs infinitely many times. We first show that there exists $n_1 \in \mathbb{N}$ such that ${\cal R}_{\epsilon} (S_0 (\bm{z})) \eqqcolon {\cal R}^1_{\epsilon} \subseteq S_{n_1} (\bm{z})$. Assume to the contrary. When $\mathcal{G}_n$ occurs, TACO chooses a dose from the finite set $\bar{S}_n (\bm{z}) \coloneqq S_n (\bm{z}) \cap \overline{\mathcal{D}}_{\bm{z}}  \subseteq {\cal R}^1_{\epsilon} \cap \overline{\mathcal{D}}_{\bm{z}}$ in round $n$. In that scenario, let $\bar{S} (\bm{z})$ denote the biggest finite set such that $\bar{S}_n (\bm{z}) \subseteq \bar{S} (\bm{z}) \subseteq {\cal R}^1_{\epsilon} \cap \overline{\mathcal{D}}_{\bm{z}}$ for every $n \in \mathbb{N}$, and let $\bar{S}_{m_1} (\bm{z}) = \bar{S} (\bm{z})$.  TACO chooses the dose with the maximum confidence interval in $\bar{S}_n (\bm{z})$ when $\mathcal{G}_n$ occurs. Then, this implies that the posterior variances of all doses in $\bar{S}_{m_1} (\bm{z})$ will go to zero by Lemma~\ref{lemma_helpful_lemma}. For any $d \in S_n (\bm{z})$, the lower and upper confidence bounds are calculated as follows,
\begin{align*} 
\bar{l}_n (\bm{z},d) &= \max \{l_{n} (\bm{z},d), l_{n} (\bm{z},d') - L q_{\bm{z}}(d,d')\} \\
\bar{u}_{n} (\bm{z},d) &= \min \{u_{n} (\bm{z},d), u_{n} (\bm{z},d') + L q_{\bm{z}}(d,d')\}~,
\end{align*}
where $d' = \argmin_{d^* \in \overline{\mathcal{D}}_{\bm{z}}} q_{\bm{z}}(d,d^*)$. Then, we have,
\begin{align}
w_n (\bm{z}, d) &\leq u_{n} (\bm{z},d') + L q_{\bm{z}}(d,d') - \big(l_{n} (\bm{z},d') - L q_{\bm{z}}(d,d')\big) \nonumber \\
&= 2 \beta^{1/2}_n \sigma_{n-1}(\bm{z}, d') + 2 L q_{\bm{z}}(d,d') \label{eqn:l41}~,
\end{align} 
where $\sigma_{n-1}(\bm{z}, d')$ goes to zero for every $d' \in \bar{S}_{m_1} (\bm{z})$. Also, by definition of $\overline{\mathcal{D}}_{\bm{z}}$, we have $2 L q_{\bm{z}}(d,d') < 2 L \frac{\lambda}{2 L} = \lambda$. Finally, by choosing $\lambda < \epsilon$ and by \eqref{eqn:l41}, we can conclude that there exists $n_1 - 1> m_1$ such that $w_n (\bm{z},d) < \lambda < \epsilon$ for every $d \in S_n (\bm{z})$ for $n \geq n_1 - 1$. That is, we have, $f(\bm{z}, d) - \epsilon < \bar{l}_{n_1 - 1} (\bm{z}, d)$ and $f(\bm{z}, d) + \epsilon > \bar{u}_{n_1 - 1} (\bm{z}, d)$, for every $d \in S_{n_1 - 1} (\bm{z})$, where $S_0 (\bm{z}) \subseteq S_{n_1 - 1} (\bm{z})$.

Now, let us restate the safe set expansion rule in \eqref{eqn:sn_exp} and the reachability operator in \eqref{eqn:reach},
\begin{align*} 
	S_n(\bm{z}_n) = S_{n-1}(\bm{z}_n) \cup \bigg( \bigcup\limits_{d \in S_{n-1}(\bm{z}_n)} \!\!\!\!\!\!\!\{d' \in \mathcal{D}~\rvert~&\bar{l}_{n} (\bm{z}_n, d) - L q_{\bm{z}_n}(d,d') \geq T_{\min} \\
	\wedge~&\bar{u}_{n} (\bm{z}_n, d) + L q_{\bm{z}_n}(d,d') \leq T_{\max}\} \bigg)~,
\end{align*}
and,
\begin{align*}
	{\cal R}_{\epsilon} (S_0(\bm{z})) \coloneqq S_0(\bm{z}) \cup \big\{d \in \mathcal{D} \mid \exists d' \in S_0(\bm{z}),~&f(\bm{z}, d') - L q_{\bm{z}}(d,d') - \epsilon \geq T_{\min} \nonumber \\
	\wedge~&f(\bm{z}, d') + L q_{\bm{z}}(d,d')  + \epsilon \leq T_{\max}\big\}~. 
\end{align*}

For every dose $d \in {\cal R}^1_{\epsilon} \cap S_0(\bm{z})$, we have $d \in S_{n_1} (\bm{z})$ since $S_0 (\bm{z}) \subseteq S_{n_1} (\bm{z})$. Now, for every dose $d \in {\cal R}^1_{\epsilon} \setminus S_0(\bm{z})$, there exists $d' \in S_0 ({\bm{z}})$ such that  $f(\bm{z}, d')- L q_{\bm{z}}(d,d')-\epsilon \geq T_{\min}$, and $f(\bm{z}, d') + L q_{\bm{z}}(d,d')  + \epsilon \leq T_{\max}$. Then, since $\bar{l}_{n_1 - 1} (\bm{z}, d') - L q_{\bm{z}}(d,d') \geq f(\bm{z}, d')- L q_{\bm{z}}(d,d')-\epsilon \geq T_{\min}$, and $\bar{u}_{n_1 - 1} (\bm{z}, d') + L q_{\bm{z}}(d,d') \leq f(\bm{z}, d') + L q_{\bm{z}}(d,d') + \epsilon \leq T_{\max}$, we can conclude that for every $d \in {\cal R}^1_{\epsilon}$, we have $d \in S_{n_1} (\bm{z})$, that is, ${\cal R}^1_{\epsilon} \subseteq S_{n_1} (\bm{z})$ for some $n_1 \in \mathbb{N}$, which is a contradiction. After that point, we can define ${\cal R}^2_{\epsilon} = {\cal R}_{\epsilon} ({\cal R}^1_{\epsilon})$, where ${\cal R}^1_{\epsilon} \subseteq S_{n_1} (\bm{z})$,  and show that that there exists $n_2 \geq n_1$ such that ${\cal R}^2_{\epsilon} \subseteq S_{n_2} (\bm{z})$ and so on. Finally, we have $N_z = n_k$ such that ${\cal R}^k_{\epsilon} \subseteq S_{N_z}$, where $N_z \in \mathbb{N}$, meaning that $d_z^* \in S_n (\bm{z})$ for every $n \geq N_z$, which completes the proof (Note that ${\cal R}^k_{\epsilon} \coloneqq {\cal R}^k_{\epsilon} (S_0(\bm{z}))$).
\end{proof}
\subsection{Proof of Theorem \ref{theorem:thm3}}
For an event ${\cal G}$, let $\mathbb{I} \{{\cal {G}}\} = 1$ if ${\cal {G}}$ holds and $0$ otherwise. Assume that the event ${\cal E}$ in Lemma \ref{lemma:lemma1} holds,
\begin{align}
R_N &= \sum_{n=1}^{N} \mid f(\bm{z}, d_n) - T \mid \nonumber \\
&= \sum_{n=1}^N \mathbb{I} \{{\cal {G}}\} \times \mid f(\bm{z}, d_n) - T \mid + \sum_{n=1}^N  \mathbb{I} \{{\neg \cal {G}}\} \times \mid f(\bm{z}, d_n) - T \mid \nonumber \\
&\leq \sum_{n=1}^N \mathbb{I} \{{\neg \cal {G}}\} \times \mid f(\bm{z}, d_n) - T \mid + \overline{T} N_z \label{eqn:linear_regret} \\
&\leq \sum_{n=1}^N\mathbb{I} \{{\neg \cal {G}}\} \times \big(\mid f(\bm{z}, d_n) - \mu_{n-1}(\bm{z}, d_n) \mid + \mid (\mu_{n-1} (\bm{z}, d_n) - T) \mid\big) + \overline{T} N_z \label{eqn:tri_ineq2} \\
&\leq \sum_{n=1}^N \mathbb{I} \{{\neg \cal {G}}\} \times 2 \beta_n^{1/2} \sigma_{n-1} (\bm{z}, d_n) + \overline{T} N_z~,\label{eqn:ucb_ineq2} 
\end{align}
where \eqref{eqn:linear_regret} follows from \eqref{eqn:worstreg} and Lemma~\ref{lemma:lemma_alt_fin}, \eqref{eqn:tri_ineq2} follows from triangle inequality on $\mathbb{R}$, and \eqref{eqn:ucb_ineq2} follows from \eqref{eqn:normreg}. The rest of the proof is similar to Theorem~\ref{theorem:thm2}.

Let $\bar{R}_N \coloneqq \sum_{n=1}^N \mathbb{I} \{{\neg \cal {G}}\} \times 2 \beta_n^{1/2} \sigma_{n-1} (\bm{z}, d_n)$. Then, by Cauchy-Schwartz ineqaulity,
\begin{align}
\bar{R}_N^2 &\leq N \sum_{n=1}^N \mathbb{I} \{{\neg \cal {G}}\} \times 4 \beta_n \sigma^2_{n-1} (\bm{z}, d_n) \nonumber \\
&\leq 4 N \beta_{N} \sigma^2 \sum_{n=1}^{N} \sigma^{-2} \sigma^2_{n-1} (\bm{z}, d_n) \nonumber \\
&\leq 8 N \beta_{N} \sigma^2 \frac{1}{2} \sum_{n=1}^{N} \frac{\sigma^{-2} \log(1 + \sigma^{-2} \sigma^2_{n-1} (\bm{z}, d_n)) }{\log(1 + \sigma^{-2})}\nonumber \\
&\leq C N \beta_{N} \max \frac{1}{2} \sum_{n=1}^N \log(1 + \sigma^{-2} \sigma^2_{n-1} (\bm{z}, d_n)) \nonumber \\
&= C N \beta_{N} \gamma_N^{vol2}~, \label{eqn:regsquare2}
\end{align}

where \eqref{eqn:regsquare2} follows from combining Lemma \ref{lemma:lemma2} and the definition of $\gamma_N^{vol2}$ in Section~4. Taking square roots of both sides, we have $\bar{R}_N \leq \sqrt{C N \beta_{N} \gamma_N^{vol2}}$. Finally, combining \eqref{eqn:ucb_ineq2} with the definition of $\bar{R}_N$, we have $R_N \leq \sqrt{C N \beta_{N} \gamma_N^{vol2}} + \overline{T} N_z$.

\section{Training details} \label{sec:traindet}
Our training session includes calculating posterior distributions for Gaussian processes for different models (i.e., models in the rows of Table~\ref{results_table}) and sampling functions from the posterior distributions. We use GPy library in Python programming language for GP implementations  which is licensed under the BSD 3-Clause License \cite{gpy2014}. We also used \cite{simglucose2018} for the open source Python implementation of UVa/PADOVA T1DM simulator, which is licensed under MIT License. We did not use any GPUs and trained all the models using a system with AMD Ryzen 5 3600x @3.8-4.4GHz AM4 CPU and 16GB RAM. The total amount of training to reproduce the results provided in the paper takes under 4 hours in parallel.

\end{document}